\newcommand{\commentout}[1]{}
\newcommand{\eat}[1]{}
\newcommand{\calU}{{\mathcal U}}
\newcommand{\tran}{\mathrm{W}}
\definecolor{olive}{rgb}{0.3, 0.4, .1}
\definecolor{fore}{RGB}{249,242,215}
\definecolor{back}{RGB}{51,51,51}
\definecolor{title}{RGB}{255,0,90}
\definecolor{dgreen}{rgb}{0.,0.6,0.}
\definecolor{gold}{rgb}{1.,0.84,0.}
\definecolor{JungleGreen}{cmyk}{0.99,0,0.52,0}
\definecolor{BlueGreen}{cmyk}{0.85,0,0.33,0}
\definecolor{RawSienna}{cmyk}{0,0.72,1,0.45}
\definecolor{Magenta}{cmyk}{0,1,0,0}
\newcommand{\veps}{\varepsilon}
\begin{document}

\title{Wasserstein Identity Testing}

\author{\name Shichuan Deng \email  dsc15@mails.tsinghua.edu.cn\\
       \addr Institute for Interdisciplinary Information Sciences\\
       Tsinghua University\\
       Beijing, China
       %\AND
       %\name Jian Li \email  lijian83@mail.tsinghua.edu.cn\\
       %\addr Institute for Interdisciplinary Information Sciences\\
       %Tsinghua University\\
       %Beijing, China
       \AND
       \name Wenzheng Li \email  wz-li14@mails.tsinghua.edu.cn\\
       \addr Institute for Interdisciplinary Information Sciences\\
       Tsinghua University\\
       Beijing, China
       \AND
       \name Xuan Wu \email wuxuan15@mails.tsinghua.edu.cn \\
       \addr Institute for Interdisciplinary Information Sciences\\
       Tsinghua University\\
       Beijing, China}

\editor{} % not relevant

\maketitle

\begin{abstract}
Uniformity testing and the more general identity testing
are well studied problems in distributional property testing.
Most previous work focuses on testing under $L_1$-distance.
However, when the support is very large or even continuous,
testing under $L_1$-distance may require a huge (even infinite) number of samples.
Motivated by such issues, we consider the identity testing in
Wasserstein distance (a.k.a. transportation distance and earthmover distance) on a metric space (discrete or continuous).

In this paper, we propose the Wasserstein identity testing problem (Identity Testing in Wasserstein distance).
We obtain nearly optimal worst-case sample complexity for the problem.
Moreover, for a large class of probability distributions satisfying
the so-called "Doubling Condition", we provide nearly instance-optimal sample complexity.
\end{abstract}

\begin{keywords}
Identity Testing, Uniformity Testing, Distribution Property Testing, Property Testing, Wasserstein Distance.
\end{keywords}

\section{Introduction}

Property Testing is proposed in the seminal work of Goldreich et.al (\cite{OSD98}), which is generally the study of designing and analyzing of randomized decision algorithm on efficiently making decision whether the given instance is having certain property or somewhat far from having it. Significantly, the query complexity of efficient property testing algorithm is often sublinear on the size of its accessing instance.

In recent years, distribution property testing has received much attention from theoretical computer science research. On most problems in distribution property testing, the input is a set of independent samples from an unknown distribution, and the decision is on whether the distribution has certain properties or not. Researchers have investigated the sample complexity of various testing problems of distribution properties such as uniformity, identity to certain distribution, closeness testing, having small entropy, having small support, being uniform on a small subset and so on (\cite{GR00},\cite{Paninski08}, \cite{CDVV14}, \cite{VV14},\cite{DBLP:journals/corr/DiakonikolasKN14}, \cite{unseenVV},\cite{cltVV}, \cite{BC17}, \cite{2017arXiv170902087D},\cite{iliasnew16}).

In this paper, we focus on the problem of identity testing. Arguably, identity testing together with its special case uniformity testing are the best studied problems in distribution property testing. In identity testing, we are given sample access to an unknown distribution $q$, the explicit description of a known distribution $p$ and a proximity parameter $\varepsilon>0$. Then we are required to distinguish the following two cases: 1) $q$ is identical to $p$; 2) certain distance (e.g. the $L_1$-distance, the Hellinger distance, or the $k$th Wasserstein distance) between $p$ and $q$ is larger than $\varepsilon$.

The sample complexity of identity testing in $L_1$-distance (equivalently statistical distance or total variation distance) is now fully understood in a series of work (\cite{GR00}, \cite{Paninski08}, \cite{CDVV14}). Specifically, testing if a distribution supported on $[n]$ is uniform with proximity parameter $\varepsilon>0$ in $L_1$-distance requires $\Theta(\sqrt{n}/\varepsilon^2)$ many samples (\cite{Paninski08},\cite{VV14}).
However, consider the case where the support is continuous, the bound above
becomes meaningless. For example, the natural problem of
testing whether a distribution supported on $[0,1]$ is uniform in $L_1$-distance would require an infinite number of samples.

Motivated by the these issues, we would like to study the testing problem
under a probability distance that metrizes the weak convergence
(on the other hand, convergence in $L_1$-distance is a strong convergence).
A popular choice is the Wasserstein distance (a.k.a. transportation distance or earthmover distance, see Definition \ref{deftran}). Using Wasserstein distance, identity testing is well defined in arbitrary, even continuous, metric space (with Borel point sets of positive finite measure). We also note that using
Wasserstein distance as the defining metric has gained significant attention
in machine learning community recently (e.g., in generative models \cite{arjovsky2017wasserstein} and mixture models \cite{li2015learning} ).

\begin{definition}[Wasserstein Distance]
\label{deftran}
Let $p,q$ be two distributions supported on metric space $(X,d)$, the Wasserstein distance (or transportation distance) between $p$ and $q$ with respect to $d$ is defined to be:
$$
\tran_d(p,q)=\inf_{M\in \mathrm{coup}(p,q)} \int d(x,y) dM(x,y)
$$
where $\mathrm{coup}(p,q)$ is the set of all coupling distributions of $p$ and $q$, i.e. all distributions on $X\times X$ that have marginal distributions $p$ and $q$.

\end{definition}

We define the problem of Wasserstein identity testing as the following.

\begin{definition}[Wasserstein Identity Testing, $\mathrm{WIT}$] \label{pro}
\label{probdef}
Let $(X,d)$ be a metric space and $p$ a distribution on $X$. For a proximity parameter $\varepsilon>0$, denote $\mathrm{WIT}(X,d,p,\varepsilon)$ the problem of designing an algorithm which, given sample access to an unknown distribution $q$,
\begin{itemize}
\item accepts with probability at least $2/3$ if $p=q$;\\
\item rejects with probability at least $2/3$ if $\tran_d(p,q)\geq \varepsilon$.
\end{itemize}
Moreover, investigating the sample complexity lower bound of these algorithms.

For notational convenience, we use $\mathrm{WIT}(p,\varepsilon)$ for short when there is no risk of ambiguity. Denote by $\calU$ the uniform distribution on $X$, then $\mathrm{WIT}(\calU,\varepsilon)$  is the Wasserstein uniformity testing problem.
\end{definition}

\begin{remark}
When $X$ is not discrete, the meaning of "the explicit description of $p$" is confusing.  However, whenever $X$ is separable, we can estimate $p$ by a distribution $p^*$ supported on a countable $\varepsilon$-net of $X$ for any $\varepsilon>0$. This transformation makes the problem discrete. To make life even easier, in what follows we assume that $|X|<\infty$ all the time and the explicit description of $p$ is well defined.
\end{remark}

\paragraph{Testing versus Learning} A direct approach of identity testing is to learn the distribution. Specifically, on testing if an unknown distribution $q$ is uniform with proximity parameter $\varepsilon>0$, we can estimate the unknown distribution $q$ by the empirical distribution $\hat{q}$ such that the distance between $q$ and $\hat{q}$ is less than $\varepsilon/10$. Then we accept if the distance between $\hat{q}$ and the uniform distribution is less than $\varepsilon/10$ and reject otherwise. A tester is efficient if it uses less samples than estimating by empirical distribution. For statistical efficiency, we are seeking for such efficient tester. For example, if the support is $[n]=\{1,2,...,n\}$ and the distance is $L_1$ distance, the sample complexity of learning is $\Theta(n^2/\varepsilon^2)$ (see e.g. \cite{devroye2001combinatorial}) while the sample complexity of testing is $\Theta(\sqrt{n}/\varepsilon^2)$ (\cite{Paninski08},\cite{VV14}). 

In our case for Wasserstein identity testing, in the natural metric space $d$-dimension hypercube $[0,1]^d$ for $d>3$ equipped with the Euclidean metric, the Wasserstein Law of Large Number (see e.g. \cite{van2014probability}) shows that $\Theta(\varepsilon^{-d})$ many samples are sufficient and necessary to estimate the distribution up to $\varepsilon$ in Wasserstein distance. Hence we automatically obtain a tester with sample complexity $O(\varepsilon^{-d})$ for the problem $\mathrm{WIT}([0,1]^d,L_2,\calU,\varepsilon),d>3$. On the other hand, in Corollary \ref{corohyp}, a tester with sample complexity $\tilde{\Theta}(\varepsilon^{-d/2})$ for the problem $\mathrm{WIT}([0,1]^d,L_2,\calU,\varepsilon),d>3$ is given.

\paragraph{The Chaining Method} The primary technique in this paper is choosing a sequence of $\varepsilon$-nets then decomposing the original testing problem into multiple easier sub-problems according to the nets. This technique is highly related to Talagrand's "Chaining Method" which plays a central roll on proving upper and lower bounds of stochastic process (\cite{talagrand14}).

\subsection{Main Contributions}

Our first contribution is characterizing the worst-case sample complexity of $\mathrm{WIT}$ in arbitrary metric space by giving nearly optimal upper bound and matching lower bound.

\begin{theorem}\label{informalone}
Let $(X,d)$ be a metric space endowed with a distribution $p$. Let $D$ be its diameter. Let $\{N_i,\log \frac{\varepsilon}{8}\leq i\leq \log D\}$ be a sequence of well-separated $2^i$-net of $(X,d)$ (see Definition \ref{net}). There is an algorithm, given sample access to an unknown distribution $q$ over $X$ and a proximity parameter $\varepsilon>0$,
 \begin{itemize}
 \item accepts with probability at least $2/3$ if $p=q$;
 \item rejects with probability at least $2/3$ if $\tran_{d}(p,q)\geq \varepsilon$.
 \end{itemize} The sample complexity of this algorithm is
 $$\tilde{O}\left(\max\left\{\frac{2^{2i}|N_i|^{1/2}}{\varepsilon^2}:\log \frac{\varepsilon}{8}\leq i\leq \log D\right\}\right). $$
 Moreover, any algorithm which distinguishes the two cases for any fixed $p$ and unknown $q$ with probability at least $2/3$ takes
 $${\Omega}\left(\max\left\{\frac{2^{2i}|N_i|^{1/2}}{\varepsilon^2}:\log \frac{\varepsilon}{8}\leq i\leq \log D\right\}\right)
 $$ many samples in the worst case.
\end{theorem}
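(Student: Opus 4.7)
The plan is to leverage the Talagrand-style chaining approach flagged in the excerpt. Both the upper and lower bounds rest on the same structural observation: Wasserstein identity testing on $(X,d)$ reduces to a family of $L_1$ identity testing problems, one per scale $i \in [\log(\varepsilon/8), \log D]$, obtained by projecting the unknown and reference distributions onto the successive $2^i$-nets $N_i$.

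For the upper bound, I would introduce the nearest-net projections $\pi_i : X \to N_i$ (so that $d(x, \pi_i(x)) \leq 2^i$) and the push-forward distributions $p_i := (\pi_i)_\# p$ and $q_i := (\pi_i)_\# q$. The tester runs a state-of-the-art $L_1$ identity tester (Valiant--Valiant or Chan--Diakonikolas--Valiant--Valiant) at each scale $i$, checking whether the empirical projection of $q$ is $L_1$-close to $p_i$ up to a scale-dependent tolerance $\widetilde{\varepsilon}_i \approx \varepsilon / (2^i \cdot \mathrm{polylog}(D/\varepsilon))$. A single batch of samples is recycled across all $O(\log(D/\varepsilon))$ scales, and the overall test rejects if any per-scale test rejects. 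The per-scale cost is $\widetilde{O}(\sqrt{|N_i|}/\widetilde{\varepsilon}_i^{\,2}) = \widetilde{O}(2^{2i} |N_i|^{1/2}/\varepsilon^2)$, which, taken as a maximum over scales, matches the bound claimed. Soundness follows from a telescoping chaining inequality of the shape
\[
\tran_d(p,q) \;\leq\; \frac{\varepsilon}{4} \;+\; \sum_{i=\log(\varepsilon/8)}^{\log D} C\cdot 2^{i+1}\cdot \|p_i - q_i\|_1,
\]
so if every per-scale $L_1$ test accepts (each $\|p_i-q_i\|_1 \leq \widetilde{\varepsilon}_i$), then $\tran_d(p,q) < \varepsilon$.

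For the lower bound, fix the scale $i^*$ that maximizes $2^{2i}|N_i|^{1/2}/\varepsilon^2$. I would construct a random family of perturbations of $p$ localized at scale $i^*$: group the points of $N_{i^*}$ into pairs at distance $\Theta(2^{i^*})$ and, within each pair independently, shift $\Theta(\delta/|N_{i^*}|)$ units of $p$-mass in a uniformly random $\pm$ direction to obtain a candidate $q$. Each shifted unit of mass travels $\Theta(2^{i^*})$, so summing over the $\Theta(|N_{i^*}|)$ pairs yields $\tran_d(p,q) = \Theta(2^{i^*} \delta)$; setting $\delta = c\varepsilon/2^{i^*}$ makes $\tran_d(p,q) \geq \varepsilon$. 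Applying Paninski's Le Cam-style argument (distinguishing the mixture over random $\pm$ perturbations from $p$ itself via chi-squared contraction) then forces any tester to draw $\Omega(\sqrt{|N_{i^*}|}/\delta^2) = \Omega(2^{2i^*}|N_{i^*}|^{1/2}/\varepsilon^2)$ samples.

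The main technical obstacle will be establishing the chaining inequality cleanly, losing only polylogarithmic factors across scales, and verifying that the well-separated net assumption (Definition~\ref{net}) suffices to absorb the constants that arise when mass ``smears'' between adjacent net cells at consecutive levels. A related subtlety on the lower-bound side is that each paired net cell in $N_{i^*}$ must carry enough $p$-mass to tolerate a $\pm\delta/|N_{i^*}|$ shift without becoming negative; for the worst-case bound it suffices to restrict attention to instances where $p$ is essentially uniform over $N_{i^*}$, which is precisely why the worst-case complexity coincides with the uniformity-testing complexity.
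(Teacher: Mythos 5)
Your plan is close in spirit to the paper's proof: both decompose $\tran_d$ along the hierarchy of nets and reduce to per-scale $L_1$ identity tests. The key difference---and a genuine gap---is in how the per-scale marginals are defined and how the chaining inequality is established. You set $p_i := (\pi_i)_\# p$ via the \emph{direct} nearest-net projections and then assert the inequality $\tran_d(p,q) \lesssim \varepsilon/4 + \sum_i 2^{i+1}\|p_i - q_i\|_1$ as ``the shape'' you expect. That inequality is exactly where the work lies, and it is not trivially true for direct projections: $\pi_{i+1}\circ\pi_i \neq \pi_{i+1}$ in general, so the $p_i$ do not form a nested family and there is no obvious coupling whose cost telescopes to $\sum_i 2^{i+1}\|p_i - q_i\|_1$. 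The paper sidesteps this by building a tree metric $(T,d_T)$ whose internal nodes at level $i$ are $N_i$, with $z\in N_i$ joined to $\pi_{i+1}(z)\in N_{i+1}$, and defining $\tilde p_i$ as the \emph{subtree} masses (iterated, not direct, projections). For these the decomposition $\tran_{d_T}(p,q) = 2^l\mathcal{L}_1(p,q) + \sum_i 2^{i+1}\mathcal{L}_1(\tilde p_i,\tilde q_i)$ is an exact identity (Lemma~\ref{lemdecom}), and $\tran_d\leq\tran_{d_T}$ holds because $d\leq d_T$ pointwise on $X$ (Lemma~\ref{lemd_t}); together these yield the chaining bound with no slack beyond the $2^l$ term. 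To repair your argument, either switch to testing $\tilde p_i$ versus $\tilde q_i$ (same per-scale sample complexity, since both live on $N_i$), or supply an actual proof of the chaining inequality for direct projections---you flag it as the main obstacle, but without resolving it the soundness of the tester is not established.

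Your lower bound takes a genuinely different, heavier route than the paper's, though it would also work. You propose re-deriving a Paninski-style $\chi^2$ lower bound from scratch by pairing points of $N_{i^*}$ and shifting mass $\pm\delta/|N_{i^*}|$ within each pair. The paper instead gives a clean black-box reduction: embed any $L_1$ identity-testing instance on $[n]$ with $n=|N_{i^*}|$ into $N_{i^*}$ by sending atom $j$ to $x_j$, observe that the $2^{i^*}$-packing property turns an $L_1$ gap of $\varepsilon'$ into a Wasserstein gap of at least $2^{i^*-1}\varepsilon'$, and invoke the known $\Omega(\sqrt{n}/\varepsilon'^2)$ bound of Corollary~\ref{vv14coro}. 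The black-box reduction is shorter and avoids a small wrinkle in your version: a well-separated $2^{i^*}$-net only guarantees pairwise distances $>2^{i^*}$, not $\Theta(2^{i^*})$, so you cannot in general ``group the points into pairs at distance $\Theta(2^{i^*})$''---only the lower bound on the pair distance is available (which is still enough for the bound, but your phrasing overclaims).
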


Actually, Theorem \ref{informalone} is a worst-case result for problems in Definition~\ref{pro}. The sample complexity bound is oblivious on $p$, the target distribution. One may wonder if we can obtain some instance bound which is nearly optimal for every $p$, like what appeared in \cite{VV14}. We show that if the distribution is not too singular (e.g. highly concentrated on one point), characterized by satisfying the following "Doubling Condition" (see Definition \ref{double}), then we can obtain nearly instance-optimal sample complexity bounds (see Theorem \ref{instance}).

  \begin{definition}[Doubling Condition] \label{double}
 Let $(X,d)$ be a metric space and $p$ be a distribution on $X$. For $x\in X$, $r>0$, denote the ball $B(x,r):=\{y\in X: d(x,y)\leq r\}$. $(X,d,p)$ is said to satisfy the "doubling condition" if there exists a constant $C> 1$ such that for every $x\in X$ and $r>0$, $p(B(x,2r))\leq Cp(B(x,r))$ where $p(B(x,r)):=\sum_{y\in B(x,r)} p(y)$.
 \end{definition}

\paragraph{Why Doubling Condition?} Doubling dimension is introduced in \cite{Assouad1983} and \cite{Lar1967} which has become a popular notion of complexity measure of metric space. In Definition \ref{double}, a counterpart of this notion in a metric space $(X,d)$ endowed some distribution $p$ is given for our use. Generally, regarding $p$ as a measure of $X$, the "doubling condition" says every ball's volume is upper bounded by a universal constant times the volume of the ball with the same center but half radius. It measures the complexity of distribution $p$. The distribution satisfying the "doubling condition" somewhat has similar property as the uniform distribution on a compact set of Euclidean space.

Since uniform distribution on a compact set (e.g. hypercube $[0,1]^k$ or unit ball $B_d(0,1)$) of Euclidean space satisfies the doubling condition, as an interesting and important corollary (see Corollary \ref{corohyp}), we show the sample complexity of problem $\mathrm{WIT}([0,1]^d,L_2,\calU,\varepsilon)$ and $\mathrm{WIT}(B_d(0,1),L_2,\calU,\varepsilon)$ is $\tilde \Theta(\varepsilon^{-\max\{2,d/2\}})$.
\subsection{Other Related Work}

There are also recent papers regarding identity or uniformity testing beyond the classical problem of $L_1$-testing. \cite{BC17} presented the generalized uniformity testing problem which asks if a discrete distribution we are taking samples from is uniform on its support. \cite{2017arXiv170902087D} then investigated the exact sample complexity of this problem. On testing in other distribution distances, \cite{DK17} gave characterizations of the sample complexity of identity testing in a variety of distance besides $L_1$-distance.

The study of metric space has a long history, we refer to \cite{Deza09} as a complete and in-depth treatment of metric space. The doubling dimension is introduced in \cite{Assouad1983} and \cite{Lar1967}, and in theoretical computer science community, it's first used in the paper \cite{Clarkson97} regarding nearest neighbor search.

Chaining is an efficient way of proving union bound for a variety of possibly dependent variables. The study of chaining dates back to Kolmogorov's study of Brownian motion. \cite{talagrand14} is a highly suggested book regarding the application of chaining methods in modern probability theory. In recent years, the chaining method finds many applications in theoretical computer science, we refer to \cite{chaining2016} as an introduction of chaining methods in theoretical computer science.

\section{Preliminary}
Some notations go first. The $L_p$ norm of a vector in $R^n$ is defined to be $\|x\|_p:=(\sum_{i=1}^n x_i^p)^{1/p}$ and $[n]=\{1,2,...,n\}$.

We then define some notations about metric space. Let $(X,d)$ be a metric space where $X$ is a ground set and $d:X\times X\rightarrow \mathbb{R}_{\geq0}$ is a metric on $X$ which satisfies:
\begin{itemize}
\item$d(a,b)=d(b,a)$ for $a,b\in X$.
\item$d(a,a)=0$ for $a\in X$.
\item Triangle Inequality: $d(a,b)+d(b,c)\geq d(a,c)$ for $a,b,c\in X$.
\end{itemize}

The diameter of $(X,d)$ is defined as $\max_{a,b\in X} d(a,b)$. For a distribution $p$ supported on $X$, we mean by a sample from $p$ a point in $X$. For a subset $M\subset X$, $p(M):=\int_M dp=\sum_{x\in M}p(x)$.

The following classical definitions about $\varepsilon$-net and $\varepsilon$-packing are essential in this paper.

 \begin{definition}[$\varepsilon$-net, $\varepsilon$-packing and well separated $\varepsilon$-net] \label{net}
 Let $(X,d)$ be a metric space and $M\subset X$.
 A subset $N\subset M$ is called an $\varepsilon$-net of $M$ if
 $$
 \forall x\in M,\exists y\in N,d(x,y)\leq \varepsilon.
 $$

 A subset $P\subset M$ is called an $\varepsilon$-packing of $M$ if
 $$
 \forall x,y\in P,x\not=y, d(x,y)> \varepsilon.
 $$

 A subset $N\subset M$ is called a well separated $\varepsilon$-net of $M$ if it's an $\varepsilon$-net as well as an $\varepsilon$-packing of $M$.
 \end{definition}

 The following lemma shows the duality between $\varepsilon$-net and packing.
 \begin{lemma}\label{dualnet}
 (see e.g. \cite{van2014probability}) Let $(X,d)$ be a metric space. Let $N(X,d,\varepsilon)$ denote the minimum size of $\varepsilon$-net of $(X,d)$ and $P(X,d,\varepsilon)$ denote the maximum size of $\varepsilon$-packing $(X,d)$. Then we have
 $$
 N(X,d,\varepsilon)\leq P(X,d,\varepsilon)\leq N(X,d,\varepsilon/2).
 $$
 \end{lemma}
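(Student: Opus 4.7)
The plan is to prove the two inequalities separately, exploiting the standard maximality/injection argument.

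For the left inequality $N(X,d,\varepsilon) \le P(X,d,\varepsilon)$, I would take a maximum-cardinality $\varepsilon$-packing $P$ (which exists and realizes $P(X,d,\varepsilon)$) and argue that $P$ is automatically an $\varepsilon$-net of $X$. Indeed, if there were some $x \in X$ with $d(x,y) > \varepsilon$ for every $y \in P$, then $P \cup \{x\}$ would still satisfy the packing condition, contradicting maximality. Hence the minimum size of an $\varepsilon$-net is at most $|P|$, giving $N(X,d,\varepsilon) \le P(X,d,\varepsilon)$.

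For the right inequality $P(X,d,\varepsilon) \le N(X,d,\varepsilon/2)$, I would fix any $\varepsilon/2$-net $N$ of $X$ of minimum size and any $\varepsilon$-packing $P$ of maximum size. Using that $N$ is a $\varepsilon/2$-net, assign to each $p \in P$ some $\pi(p) \in N$ with $d(p,\pi(p)) \le \varepsilon/2$. The key step is to show $\pi$ is injective: if $\pi(p_1) = \pi(p_2) = n$ for distinct $p_1,p_2 \in P$, then the triangle inequality gives
\[
d(p_1,p_2) \le d(p_1,n) + d(n,p_2) \le \varepsilon/2 + \varepsilon/2 = \varepsilon,
\]
contradicting the strict packing inequality $d(p_1,p_2) > \varepsilon$. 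Hence $|P| \le |N|$, which is the desired inequality.

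No real obstacle is expected; both steps are routine. The only subtle point is that the packing condition is strict ($d(x,y) > \varepsilon$) while the net condition is non-strict ($d(x,y) \le \varepsilon$), so one needs to be careful that this asymmetry is consistent with the two arguments above (maximality goes through because adding a point at distance strictly greater than $\varepsilon$ preserves a strict packing, and injectivity goes through because the triangle inequality yields a non-strict bound that still contradicts the strict packing condition).
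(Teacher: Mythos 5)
Your proof is correct, and it is exactly the standard maximality/injection duality argument that the paper itself does not reproduce but simply cites to \cite{van2014probability}. Both directions are handled properly, including the careful treatment of the strict inequality in the packing condition versus the non-strict one in the net condition (and the existence of a maximum packing is unproblematic here since the paper assumes $|X|<\infty$ throughout).
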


To acknowledge the great importance of the work by \cite{VV14}, we restate their core theorem here, and show how it implies other worst-case bounds.

\begin{theorem}[\cite{VV14}]\label{vv14thm}
There exists an algorithm such that, when given sample access to an unknown distribution $q$ and full description of $p$, both supported on $[n]$, it uses $O\left(\max\left\{\frac{1}{\veps},\frac{\left\|p_{-\veps/16}^{-\max}\right\|_{2/3}}{\veps^2}\right\}\right)$ samples from $q$ to distinguish $q=p$ from $\|p-q\|_1\geq\veps$ with success probability at least $2/3$. Moreover, any such algorithm requires $\Omega\left(\max\left\{\frac{1}{\veps},\frac{\left\|p_{-\veps}^{-\max}\right\|_{2/3}}{\veps^2}\right\}\right)$ samples from $q$.
\end{theorem}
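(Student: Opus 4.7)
The plan is to reproduce the Valiant--Valiant analysis, which revolves around a single reweighted chi-squared statistic and a matching Le~Cam lower bound. The $2/3$ exponent arises from optimizing Cauchy--Schwarz against an $L_1$ constraint, which is the source of the asymmetry between upper and lower bound.

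For the upper bound, first I would peel off the ``easy'' parts of $p$. The heaviest coordinate $i^\star$ is handled separately: since any $\veps/2$ deviation in $q_{i^\star}$ can be detected with $O(1/\veps)$ samples by a one-dimensional Bernoulli test, this contributes the $1/\veps$ term. Next, I would discard a $\veps/16$-mass tail of the smallest coordinates, arguing that if $\|p-q\|_1\geq\veps$ then the restriction to the remaining ``middle'' support $\tp=p_{-\veps/16}^{-\max}$ still has $L_1$-distance $\Omega(\veps)$ from the corresponding restriction of $q$. On this truncated support I would Poissonize (draw $m'\sim\mathrm{Poisson}(m)$ samples, so the per-bin counts $X_i$ become independent $\mathrm{Poisson}(mq_i)$), and form
\[
Z \;=\; \sum_{i}\frac{(X_i-mp_i)^2-X_i}{p_i^{2/3}}.
\]
The subtracted $X_i$ makes $Z$ an unbiased estimator of $m^2\sum_i (q_i-p_i)^2/p_i^{2/3}$; by Cauchy--Schwarz, $\bigl(\sum_i|p_i-q_i|\bigr)^2\le \bigl(\sum_i(p_i-q_i)^2/p_i^{2/3}\bigr)\bigl(\sum_i p_i^{2/3}\bigr)$, so in the ``far'' case $\E[Z]\gtrsim m^2\veps^2/\|\tp\|_{2/3}^{2/3}$. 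A direct Poisson-moment computation shows $\mathrm{Var}(Z)=O(m^2\|\tp\|_{2/3}^{2/3}+m^3\sum_i(p_i-q_i)^2/p_i^{4/3})$; Chebyshev with $m=C\|\tp\|_{2/3}/\veps^2$ (suppressing the second variance term using the discarded tail) separates the two hypotheses with constant probability.

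For the lower bound, I would use Le~Cam's two-point method over a random alternative. Partition the coordinates (excluding the max) into pairs $(i,j)$ with comparable $p_i,p_j$; for each pair toss an independent sign $\sigma_{ij}\in\{\pm 1\}$ and set $q_i=p_i+\sigma_{ij}\delta_{ij}$, $q_j=p_j-\sigma_{ij}\delta_{ij}$, where the $\delta_{ij}$'s are chosen so that $\sum_{(i,j)}2\delta_{ij}\ge\veps$ (this will force the $L_1$-distance) while maximizing the difficulty of the test. The difficulty is measured by the $\chi^2$-divergence between $\mathrm{Poi}(mp)$ and the mixture $\E_\sigma[\mathrm{Poi}(mq)]$, which after expanding the Poisson likelihood ratio reduces to $\sum_{(i,j)} m^2 \delta_{ij}^4 / (p_ip_j)^{?}$-type terms. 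Lagrange-optimizing the $\delta_{ij}$'s subject to $\sum\delta_{ij}\ge\veps/2$ yields a dual quantity precisely of the form $\|p_{-\veps}^{-\max}\|_{2/3}/\veps^2$, giving the $\Omega$-bound.

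The main obstacle is the simultaneous variance control and duality calculation: the second-order variance term $m^3\sum(p_i-q_i)^2/p_i^{4/3}$ is not automatically smaller than $\E[Z]^2$ and must be tamed by the $\veps/16$ truncation combined with the $L_1$-bound on $p-q$. The gap between the truncation thresholds $\veps/16$ (upper) and $\veps$ (lower) is intrinsic to this approach---equalizing them is known to be hard and is not claimed here. A secondary subtlety is making the Le~Cam perturbation actually $L_1$-feasible (nonnegative $q_i$), which forces the pairing construction and an extra case analysis for coordinates $p_i<\delta_{ij}$ that must be grouped with larger partners; this is where the ``remove the $\veps$-tail'' operation on the lower-bound side originates.
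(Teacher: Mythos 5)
The paper does not prove this theorem: it is restated verbatim from Valiant and Valiant (2014) and used as a black box (see the sentence immediately preceding the statement, and the absence of any \texttt{proof} environment). So there is no in-paper argument to compare against; what you have written is a blind reconstruction of the VV14 proof itself. Your sketch captures the essential machinery --- Poissonization, the unbiased chi-squared-type statistic $Z$ weighted by $p_i^{-2/3}$, the Cauchy--Schwarz step tying $\|p-q\|_1$ to $\sum_i(p_i-q_i)^2/p_i^{2/3}$, Chebyshev to separate hypotheses, and the Le~Cam two-point lower bound over paired sign perturbations optimized by Lagrange multipliers, which is indeed where the $2/3$-norm arises on both sides. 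Two technical points deserve care. First, your variance formula drops a factor of $q_i$: for Poissonized counts the cubic term is $m^3\sum_i q_i(p_i-q_i)^2/p_i^{4/3}$, not $m^3\sum_i(p_i-q_i)^2/p_i^{4/3}$, and controlling that extra $q_i$ (which can be large on light $p_i$ bins) is precisely what forces the more intricate bucketing of the actual VV14 tester, not just the $\veps/16$ tail truncation. Second, a Bernoulli test on the heaviest coordinate costs $\Theta\bigl(p_{i^\star}(1-p_{i^\star})/\veps^2\bigr)$ samples, which is $O(1/\veps)$ only when $p_{i^\star}$ is within $O(\veps)$ of $0$ or $1$; the $1/\veps$ term in the upper bound needs a different justification in the intermediate regime. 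These are sketch-level imprecisions rather than fatal errors, and overall your outline is a faithful account of the cited result's proof strategy.
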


The worst-case upper and lower bounds of Theorem~\ref{vv14thm} is given by $p=\calU$, the uniform distribution, where $\|\calU\|_{2/3}=\sqrt{n}=\max_{p}\|p\|_{2/3}$.

\begin{corollary}\label{vv14coro}
There exists an algorithm such that, when given sample access to an unknown distribution $q$ and full description of $p$, both supported on $[n]$, it uses $O(\sqrt{n}/\veps^2)$ samples from $q$ to distinguish $q=p$ from $\|p-q\|_1\geq\veps$ with success probability at least $2/3$. Moreover, any such algorithm requires $\Omega(\sqrt{n}/\veps^2)$ in the worst case, in the choices of $p$.
\end{corollary}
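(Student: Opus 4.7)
The corollary is essentially a direct specialization of Theorem~\ref{vv14thm}, so the plan is to verify both the worst-case upper bound and the matching worst-case lower bound by bounding the $\ell_{2/3}$ quasi-norm quantity that governs the sample complexity in Theorem~\ref{vv14thm}.

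For the upper bound, I would argue that for every probability distribution $p$ supported on $[n]$, one has
\[
\bigl\|p_{-\veps/16}^{-\max}\bigr\|_{2/3}\;\leq\;\|p\|_{2/3}\;\leq\;\sqrt{n},
\]
since removing mass and the maximum coordinate can only decrease the quasi-norm, and the concave maximization $\max_p \sum_i p_i^{2/3}$ over the simplex is attained at the uniform distribution by Jensen's inequality (equivalently, Lagrange multipliers), giving the value $n \cdot (1/n)^{2/3} = n^{1/3}$ and hence $\|p\|_{2/3} \leq (n^{1/3})^{3/2} = \sqrt{n}$. Since $1/\veps \leq \sqrt{n}/\veps^2$ whenever $\sqrt{n}/\veps\geq 1$ (and otherwise the problem is trivial), the bound from Theorem~\ref{vv14thm} is at most $O(\sqrt{n}/\veps^2)$ for every $p$.

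For the lower bound, I would instantiate Theorem~\ref{vv14thm} with the adversarial choice $p=\calU$, the uniform distribution on $[n]$. After removing the single maximum coordinate and an additional $\veps$-mass of smallest coordinates, at least $n - 1 - \veps n$ coordinates remain, each of probability exactly $1/n$. Thus
\[
\bigl\|\calU_{-\veps}^{-\max}\bigr\|_{2/3}\;\geq\;\bigl((1-\veps)n \cdot n^{-2/3}\bigr)^{3/2}\;=\;\Omega(\sqrt{n}),
\]
provided $\veps$ is bounded away from $1$ (which we may assume, as otherwise the statement is vacuous). Plugging this into the lower-bound half of Theorem~\ref{vv14thm} gives $\Omega(\sqrt{n}/\veps^2)$ samples as needed.

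There is no real obstacle here; the only non-routine point is justifying the inequality $\|p\|_{2/3}\leq\sqrt{n}$ for distributions on $[n]$, which is a standard concave-maximization computation and is already foreshadowed in the paragraph preceding the corollary. The rest is substitution into Theorem~\ref{vv14thm}.
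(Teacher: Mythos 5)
Your proposal is correct and matches the paper's (very brief) justification, which simply observes that $\|\calU\|_{2/3}=\sqrt{n}=\max_p\|p\|_{2/3}$ and that the uniform distribution therefore witnesses both the worst-case upper and lower bounds of Theorem~\ref{vv14thm}. Your Jensen-inequality computation and the explicit lower bound on $\bigl\|\calU_{-\veps}^{-\max}\bigr\|_{2/3}$ are exactly the details the paper leaves implicit.
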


 \section{Wasserstein Identity Testing}\label{s3}

 First, we restate Theorem~\ref{informalone} and give its proof in two folds.

\begin{theorem} \label{gen}
Let $(X,d)$ be a metric space endowed with a distribution $p$. Let $D$ be its diameter. Let $\{N_i,\log \frac{\varepsilon}{8}\leq i\leq \log D\}$ be a sequence of well-separated $2^i$-net of $(X,d)$. There is an algorithm, given sample access to an unknown distribution $q$ over $X$ and $\varepsilon>0$,
 \begin{itemize}
 \item accepts with probability at least $2/3$ if $p=q$;
 \item rejects with probability at least $2/3$ if $\tran_{d}(p,q)\geq \varepsilon$.
 \end{itemize} The sample complexity of this algorithm is
 $$\tilde{O}\left(\max\left\{\frac{2^{2i}|N_i|^{1/2}}{\varepsilon^2}:\log \frac{\varepsilon}{8}\leq i\leq \log D\right\}\right). $$
 Moreover, any algorithm which distinguishes the two cases for any fixed $p$ and unknown $q$ with probability at least $2/3$ takes
 $${\Omega}\left(\max\left\{\frac{2^{2i}|N_i|^{1/2}}{\varepsilon^2}:\log \frac{\varepsilon}{8}\leq i\leq \log D\right\}\right)
 $$ many samples in the worst case of $p$.
\end{theorem}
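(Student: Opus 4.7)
My strategy is to reduce Wasserstein identity testing on $(X,d)$ to a collection of $L_1$-identity testing problems at different scales via the chaining method, invoking Corollary~\ref{vv14coro} as a black box at each scale. For each $i\in[\log(\varepsilon/8),\log D]$ let $\pi_i:X\to N_i$ send each point to a nearest net point, and write $p_i=\pi_i^* p$, $q_i=\pi_i^* q$ for the pushforwards. I first arrange the nets to be nested ($N_{i+1}\subseteq N_i$, by a greedy construction from the coarsest scale downward that preserves both the packing and the covering properties) and choose the $\pi_i$'s consistently so that $\pi_{i+1}=\pi_{i+1}\circ\pi_i$; then the pair $(\pi_i(x),\pi_{i+1}(x))$ is a deterministic function of $\pi_i(x)$ alone.

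For the upper bound, the tester draws a single batch of $s=\tilde O(\max_i 2^{2i}|N_i|^{1/2}/\varepsilon^2)$ samples from $q$ and, for each scale $i$, pushes them through $\pi_i$ and runs the identity tester of Corollary~\ref{vv14coro} on $q_i$ versus the known $p_i$ with $L_1$ threshold $\beta_i:=c\varepsilon/(2^i\log(D/\varepsilon))$; the final tester accepts iff every subtest accepts. Completeness is immediate since $p=q$ implies $p_i=q_i$ for all $i$. For soundness I use the Kantorovich--Rubinstein dual $\tran_d(p,q)=\sup_{f\text{ is }1\text{-Lipschitz}}\bigl[\E_p f-\E_q f\bigr]$ and telescope across scales: setting $f_i(x):=f(\pi_i(x))$, the triangle inequality gives $|f_i-f_{i+1}|\le 3\cdot 2^i$, and nestedness makes $f_i-f_{i+1}$ a function of $\pi_i$ only, so
\[
\bigl|\E_p(f_i-f_{i+1})-\E_q(f_i-f_{i+1})\bigr|\le 3\cdot 2^i\,\|p_i-q_i\|_1.
\]
Summing the telescope with the end-point bounds $|f-f_{\log(\varepsilon/8)}|\le \varepsilon/8$ (trivially giving an error $\le\varepsilon/4$) and $f_{\log D}$ constant (contributing $0$), I obtain the chaining inequality
\[
\tran_d(p,q)\le \varepsilon/4 + 3\sum_{i=\log(\varepsilon/8)}^{\log D} 2^i\,\|p_i-q_i\|_1,
\]
which is strictly less than $\varepsilon$ whenever every $\|p_i-q_i\|_1<\beta_i$, for a small enough absolute constant $c$. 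A union bound over the $O(\log(D/\varepsilon))$ subtests costs only a $\log\log$-type factor, absorbed by $\tilde O$.

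For the lower bound, fix the scale $i^*$ achieving the max and take $p$ to be uniform on $N_{i^*}$. Since $N_{i^*}$ is a $2^{i^*}$-packing, any $q$ supported on $N_{i^*}$ satisfies $\tran_d(p,q)\ge 2^{i^*}\,\|p-q\|_1/2$, so a $\tran_d$-tester with parameter $\varepsilon$ automatically solves $L_1$-identity on $[|N_{i^*}|]$ with parameter $\varepsilon':=2\varepsilon/2^{i^*}$. Substituting into the worst-case $\Omega(\sqrt n/\varepsilon'^2)$ bound of Corollary~\ref{vv14coro} gives $\Omega(2^{2i^*}|N_{i^*}|^{1/2}/\varepsilon^2)$; the only scales where this reduction breaks down are those with $2^{i^*}<\varepsilon$, a constant number of scales near $\log(\varepsilon/8)$ whose contribution is dominated by a neighboring scale and can be absorbed.

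The main obstacle I anticipate is pinning down the chaining inequality cleanly: in particular, constructing a nested net sequence with commuting projections $\pi_{i+1}=\pi_{i+1}\circ\pi_i$ so that $\|p_i-q_i\|_1$ (rather than the larger TV distance of a joint pushforward to $N_i\times N_{i+1}$) is what appears in the telescope. Once that is arranged, the upper bound is just a union of one-scale $L_1$ tests sharing a single sample batch, the lower bound is a one-scale Paninski reduction, and the $\log(D/\varepsilon)$ factor is pushed entirely into $\tilde O$ rather than $\Omega$.
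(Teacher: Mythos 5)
Your proposal is correct and the algorithmic content (reduce to one $L_1$-identity test per scale, accept iff all subtests accept; lower bound via a one-scale reduction using the packing property) is the same as the paper's, but your proof of the key chaining inequality for the upper bound takes a genuinely different route. The paper embeds $X$ into a weighted tree $(T,d_T)$ built from the nets, proves $d_T\ge d$ (Lemma~\ref{lemd_t}), and then uses the exact identity $\tran_{d_T}(p,q)=2^l\mathcal{L}_1(p,q)+\sum_i 2^{i+1}\mathcal{L}_1(\tilde p_i,\tilde q_i)$ (Lemma~\ref{lemdecom}); you instead invoke Kantorovich--Rubinstein duality and telescope $f=\sum_i(f_i-f_{i+1})+\cdots$ to get $\tran_d(p,q)\le \varepsilon/4+3\sum_i 2^i\|p_i-q_i\|_1$. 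These are primal and dual incarnations of the same decomposition: the paper's tree structure automatically produces a consistent hierarchy of projections (a leaf's level-$i$ ancestor \emph{is} the composed projection), which is exactly the consistency you have to manufacture by hand via a nested-net construction with commuting $\pi_{i+1}=\pi_{i+1}\circ\pi_i$. The tree viewpoint buys an exact equality and avoids the nesting bookkeeping; the dual chaining is arguably closer to Talagrand-style arguments and would generalize more transparently to, e.g., $\tran^k_d$ or to settings where one does not want to modify the given nets (the theorem is stated for an arbitrary well-separated net sequence, and your construction silently replaces it by a nested one, which is harmless for the bound by packing--covering duality but worth flagging). Your lower bound argument matches the paper's, and you are in fact slightly more careful than the paper in noting that the reduction to an $L_1$-identity problem needs $\varepsilon'=2\varepsilon/2^{i^*}\le 2$, i.e.\ $2^{i^*}\gtrsim\varepsilon$, an edge case the paper does not discuss; your hand-waved absorption of the handful of scales below $\log\varepsilon$ should be stated more precisely, but it is in no worse shape than the published proof.
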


\iffalse
\begin{remark}
At first glance, the sample complexity is dependent on the choice of $\{N_i\}$. However, by the duality of net and packing, we know that for any sequence of well separated $2^i$-net $\{N_i\}$, the sample complexity bound is the equivalent up to a universal constant.
\end{remark}
\fi

\subsection{The Upper Bound}

The high level idea of our testing algorithm is by converting $(X,d)$ into a tree metric space $(T,d_T)$ with $X\subset T$ and $d\leq d_T$ when the latter is restricted on $X$, hence $\tran_d(p,q)\leq \tran_{d_T}(p,q)$. This means identity testing to $p$ in $\tran_{d_T}$ is at least as hard as in $\tran_d$, so a tester which works on $(X,d_T)$ also works on $(X,d)$. More specifically, we make use of $\varepsilon$-net of metric space $(X,d)$ to do the construction of $d_T$.

Recall that $\{N_i\}$ is a sequence of $2^i$-net of $(X,d)$. For each $y\in X$, we define $\pi_{i}(y)=\arg\min_{z\in N_i}d(z,y)$. Denote $l=\lfloor\log\frac{\varepsilon}{8}\rfloor$ and $r=\lceil\log D\rceil$.

%For each $x\in N_i$, define the clustering of $x$ to be $C_i(x)=\{y\in M : \pi_i(y)=x\}$. Define $p_i(x):=\sum_{y\in C_i(X)}p(y)$ as a distribution over $N_i$. Furthermore, we define another distribution on $N_i$ $\tilde{p}_i(x)=\sum_{y\in C_i(x)\cap N_{i-1}}\tilde{p}_{i-1}(y)$. In the same way, we can define $q_i$ and $\tilde{q}_i$ as distributions over $N_i$.

We convert the metric space $(X,d)$ to a tree metric $(T,d_T)$ in the following way: Let $T=X\cup (\cup_{\log \frac{\varepsilon}{8}\leq i\leq \log D} N_i)$ (with replacement, see the figure below) where every $x\in X$ corresponds to a leaf of $T$. There are $\lceil\log D\rceil+\lfloor\log \frac{8}{\varepsilon}\rfloor$ many levels of internal nodes, every node in the $i$-th level of the tree represents a point in $N_i,\log \frac{\varepsilon}{8}\leq i\leq \log D$. For every leaf $x\in X$, add an edge $(x, \pi_{l}(x))$ with weight $2^l$. For each internal node $x\in N_i$, add an edge $(x, \pi_{i+1}(x))$ with weight $2^{i+1}$. Since the diameter of $(X,d)$ is $D$,  $N_r$ contains only one point which is the root of $T$. Define the tree metric $d_T(x,y)$ to be the sum of weights of edges in the unique shortest tree path from $x$ to $y$. Converting $p$ ($q$) into a distribution supported on $T$ such that it's supported on the leaves of $T$ with the same probability mass. With a little abuse of notation, we also use $p$ ($q$) to denote the transformed distribution on leaves of $T$.

\begin{definition}
For $x\in N_i$, let $\tilde p_i(x)$ (resp. $\tilde q_i(x)$) denote the sum of probability mass of all leaves in the subtree rooted at $x$. Then $\tilde p_i$, $\tilde q_i$ can be regarded as a distribution over $N_i$.
\end{definition}

Having defined the distributions induced by the well-separated $2^i$-nets, we are ready to give the algorithm that solves the problem below.

\begin{algorithm}[!h]
	\label{gen_algo}
	\caption{$\mathrm{WIT}$ over General Metric Space}
	\begin{algorithmic}[1] \label{algone}
		\REQUIRE Full description of metric space $(X,d)$, distribution $p$, proximity parameter $\varepsilon>0$ and sample access to $q$, well separated $2^i$-net sequence $\{N_i:\log \frac{\varepsilon}{8}\leq i\leq \log D\}$.
		\ENSURE  Accept with probability at least $2/3$ if $p=q$; reject with probability at least $2/3$ if $\tran_{d}(p,q)\geq \varepsilon$.
		\STATE Set $m=O\left(\log^3 \big{(} \frac{D}{\varepsilon} \big{)}\max\left\{\frac{2^{2i}|N_i|^{1/2}}{\varepsilon^2}:\log \frac{\varepsilon}{8}\leq i\leq \log D\right\}\right)$. Take $m$ i.i.d samples from $q$.
		\STATE For each $i$, using these samples to test whether $\tilde p_i=\tilde q_i$ or $\|\tilde p_i-\tilde q_i\|_1\geq {\Omega}(2^{-i}\varepsilon/\log(D/\veps))$. (By the $L_1$ tester given in Corollary~\ref{vv14coro})
		\RETURN Accept if all sub-tester accept, otherwise reject.
	\end{algorithmic}
\end{algorithm}

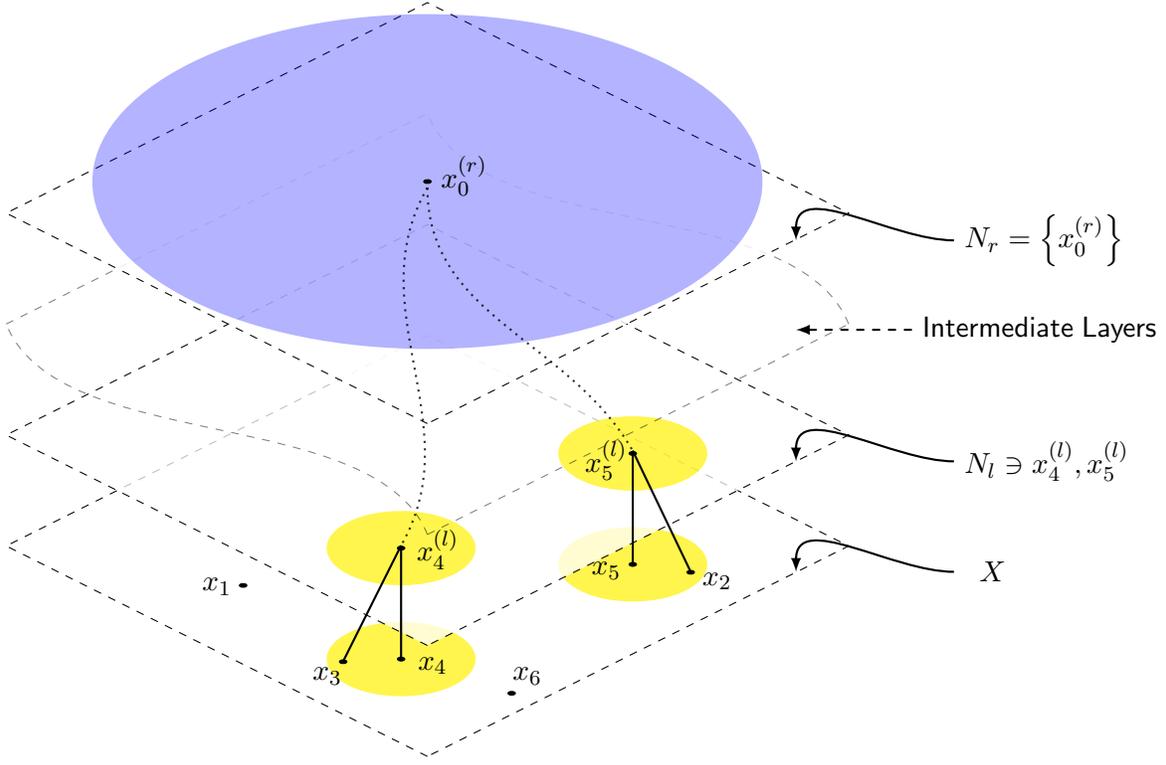
\begin{figure}[!h]
\centering
\begin{tikzpicture}[scale=.7,every node/.style={minimum size=1cm},on grid]
		
    %slanting: production of a set of n 'laminae' to be piled up. N=number of grids.
    \begin{scope}[
        yshift=-60,every node/.append style={
        yslant=0.5,xslant=-1},yslant=0.5,xslant=-1
                  ]
        %marking border
        \draw[black,dashed] (-1,-1) rectangle (7,7);
        \fill [fill=yellow,fill opacity=.7](.6,1.1) circle (1);
        \fill [fill=yellow,fill opacity=.7](4.6,0.7) circle (1);

        \draw [fill=black](1,-0.6) circle (.05) ;
        \draw [fill=black](0.5,4) circle (.05);
        \draw [fill=black](5,0) circle (.05);
        %\draw [fill=black](5,5) circle (.05);
        \draw [fill=black](0,1.6) circle (.05);
        
        \draw [fill=black](.6,1.1) circle (.05);
        \draw [fill=black](4.6,0.7) circle (.05);

    \end{scope} %end of drawing grids
	
	\begin{scope}[
            yshift=0,every node/.append style={
            yslant=0.5,xslant=-1},yslant=0.5,xslant=-1
            ]
        % opacity to prevent graphical interference
        \fill[white,fill opacity=.75] (-1,-1) rectangle (7,7);
        \draw[black,dashed] (-1,-1) rectangle (7,7);%marking borders

        \fill [fill=yellow,fill opacity=.7](.6,1.1) circle (1);
        \fill [fill=yellow,fill opacity=.7](4.6,0.7) circle (1);
        
        \draw [fill=black](.6,1.1) circle (.05);
        \draw [fill=black](4.6,0.7) circle (.05);
        
    \end{scope}
    
            \begin{scope}[
        yshift=60,every node/.append style={
        yslant=0.5,xslant=-1},yslant=0.5,xslant=-1
                  ]
                  %\fill[gray,opacity=.1](-1,-1) rectangle (7,7);
                  %\draw[black, dashed,opacity=.3](-1,-1) rectangle (7,7);
                  \draw[black,dashed,opacity=.5](-1,-1) -- (7,-1)
                  (-1,7) -- (7,7)
                  (-1,-1) .. controls (1,2) and (-3,4) .. (-1,7)
                  (7,-1) .. controls (9,2) and (5,4).. (7,7);
         \end{scope}
         
             \begin{scope}[
    	yshift=120,every node/.append style={
    	    yslant=0.5,xslant=-1},yslant=0.5,xslant=-1
    	             ]
        %\fill[white,fill opacity=.9] (-1,-1) rectangle (7,7);
                \fill[white,fill opacity=.85] (-1,-1) rectangle (7,7);
        \draw[black,dashed] (-1,-1) rectangle (7,7);

        \fill [fill=blue,fill opacity=.3](3.6,3.6) circle (4.5);
        \draw[fill=black](3.6,3.6) circle (.05);

    \end{scope}

    %putting arrows and labels:
    \draw[-latex,thick] (10,6.7) node[right]{$N_r=\left\{x_0^{(r)}\right\}$}
         to[out=180,in=90] (7,6.7);

    \draw[-latex,dashed,thick] (9.2,5) node[right]{$\mathsf{Intermediate\;Layers}$}
         to[out=180,in=0] (7,5);

    \draw[-latex,thick](10,2.5)node[right]{$N_l\ni x_4^{(l)},x_5^{(l)}$}
        to[out=180,in=90] (7,2.5);

    \draw[-latex,thick](10,0.4)node[right]{$X$}
        to[out=180,in=90] (7,0.4);	
    %drawing points on grid's conrners.
    \fill[black]
            (0.2,0.1) node [above] {$x_4^{(l)}$}        
            (3.4,1.8) node [above] {$x_5^{(l)}$}
        
        (0.7,7.2) node [above] {$x_0^{(r)}$}

        (-4,-0.6) node [above] {$x_1$}
        (0.1,-2.1) node [above] {$x_4$}
        (-1.9,-2.3) node [above] {$x_3$}
        (1.9,-2.3) node [above] {$x_6$}
        (3.4,-0.3) node [above] {$x_5$}
        (5.5,-0.5) node [above] {$x_2$};	
        
      \draw [thick](-0.5,0.9) node {} to (-0.5,-1.3)
      	 (-0.5,0.9) node {} to (-1.6,-1.3)
	 (3.9,2.7) node {} to (3.9,0.5)
	 (3.9,2.7) node {} to (5.0,0.4);
      \draw[dotted,thick,opacity=.8](0,7.7) node {} to [out=240,in=60](-0.5,0.9)
	 (0,7.7) node {} to [out=270,in=120](3.9,2.7);
	 
\end{tikzpicture}
\caption{\small The abstract structure of $T$. Every plane represents a level $N_i$ in $T$. To avoid abuse of notation, we add a superscript for nodes higher than the bottom level, indicating the level they lie in. Filled ovals stand for the balls $B(z^{(i)},2^i)$. }
\end{figure}
%%%%%%%----------------------------------------fancy picture here

 \begin{lemma}\label{lemd_t}If $p=q$ then $\tran_{d_T}(p,q)=0$. Moreover,
\begin{eqnarray}
\tran_{d_T}(p,q)\geq \tran_{d}(p,q) \label{one}
\end{eqnarray}
 \end{lemma}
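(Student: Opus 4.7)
The plan is to handle the two assertions in turn, with the bulk of the work going into the pointwise metric domination $d(x,y) \le d_T(x,y)$ for all $x,y \in X$ (which we identify with the leaves of $T$). Once that is in hand, the stochastic statement will fall out by integrating against any coupling.

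For the first assertion, if $p=q$ then, viewed on $T$, both are supported on the leaves and agree there, so the diagonal coupling $M$ assigning mass $p(x)$ to the pair $(x,x)$ for every leaf $x$ is a valid element of $\mathrm{coup}(p,q)$; since $d_T(x,x)=0$, this gives $\tran_{d_T}(p,q)=0$ immediately.

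For the metric domination I would fix two leaves $x,y\in X$ and walk along the unique $T$-path between them. Let $x^{(l)}=\pi_l(x)$, $x^{(i)}=\pi_i(x^{(i-1)})$ for $l+1\le i\le r$, and define $y^{(i)}$ analogously. Since $N_r$ is a single point, the two chains meet at some lowest common ancestor $u=x^{(j)}=y^{(j)}$, and the tree path is
\[
x\to x^{(l)}\to\dots\to x^{(j)}=u=y^{(j)}\to\dots\to y^{(l)}\to y.
\]
By construction each leaf edge $(z,\pi_l(z))$ has weight $2^l\ge d(z,\pi_l(z))$ (because $N_l$ is a $2^l$-net of the full space $X$), and each internal edge $(z,\pi_{i+1}(z))$ with $z\in N_i$ has weight $2^{i+1}\ge d(z,\pi_{i+1}(z))$. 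Summing all edge weights along the path gives $d_T(x,y)$, while summing the corresponding $d$-distances and applying the triangle inequality in $(X,d)$ along the same chain of points gives an upper bound on $d(x,y)$; comparing the two sums yields $d(x,y)\le d_T(x,y)$.

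Finally, any coupling $M\in\mathrm{coup}(p,q)$ on $T\times T$ is supported on (leaves)$\times$(leaves)$= X\times X$, where we just showed $d_T\ge d$ pointwise. Hence
\[
\int d_T(x,y)\,dM(x,y)\;\ge\;\int d(x,y)\,dM(x,y)\;\ge\;\tran_d(p,q),
\]
and taking the infimum over $M$ yields \eqref{one}. I do not expect a real obstacle beyond the bookkeeping of the telescoping chain of net approximations; the only point needing care is that each $N_i$ is a net of all of $X$, so the inequality $d(z,\pi_i(z))\le 2^i$ is legitimate at every vertex encountered on the tree path, not only at the leaves.
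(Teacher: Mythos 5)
Your proof is correct and follows essentially the same approach as the paper's: reduce to the pointwise inequality $d(x,y)\le d_T(x,y)$, establish it by telescoping along the unique tree path through the lowest common ancestor and bounding each $d$-distance between consecutive path vertices by the corresponding edge weight $2^{i+1}$ (valid since $N_{i+1}$ is a $2^{i+1}$-net of all of $X$), then integrate against couplings. The only cosmetic difference is that you spell out the diagonal coupling for the $p=q$ case, whereas the paper simply invokes determinism of the construction.
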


 \begin{proof}[Proof of Lemma~\ref{lemd_t}] The construction is deterministic, so we know $p=q$ implies $\tran_{d_T}(p,q)=0$. To prove $\tran_{d_T}(p,q)\geq \tran_{d}(p,q)$, we only need to show $d_T(x,y)\geq d(x,y)$ for every $x,y\in X$. Assume the \emph{lowest common ancestor} of $x,y$ in $T$ is in the $j$-th level and all internal nodes along the unique tree path from $x$ to $y$ is $z_l,z_{l+1},...,z_j=w_j,w_{j-1},...,w_l$ where $z_i,w_i\in N_i$. So by triangle inequality and the construction of $T$,
 \begin{eqnarray*}
d(x,y)&\leq& d(x,z_l)+\sum_{i=0}^{j-l-1}d(z_{l+i},z_{l+i+1})+\sum_{i=0}^{j-l-1}d(w_{j-i},w_{j-i-1})+d(w_l,y)\\
 &\leq&2^l+2^{l+1}+...2^j+2^j+2^{j-1}+...+2^l\\
 &=&d_T(x,z_l)+\sum_{i=0}^{j-l-1}d_T(z_{l+i},z_{l+i+1})+\sum_{i=0}^{j-l-1}d_T(w_{j-i},w_{j-i-1})+d_T(w_l,y)\\
 &=&d_T(x,y)\\
 \end{eqnarray*}
 \end{proof}

We have the following simple characterization of Wasserstein distance w.r.t. $d_T$. This lemma shows that actually, we can convert the the problem $\mathrm{WIT}(T,d_T,p,\varepsilon)$ to some sub-problems in $L_1$-distance.

\begin{lemma}\label{lemdecom}
  \begin{eqnarray}
 \tran_{d_T}(p,q)=2^l\mathcal{L}_1(p,q)+\sum_{i=l}^{r-1} 2^{i+1} \mathcal{L}_1(\tilde p_{i},\tilde q_{i}) \label{two}
 \end{eqnarray}
 where $\mathcal{L}_1(\cdot,\cdot)$ is the $L_1$ distance between two probability distributions with the same support.
 \end{lemma}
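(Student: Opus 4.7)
The plan is to use the well-known closed-form expression for Wasserstein distance on a tree metric: for any edge $e$ in a tree, removing $e$ splits the tree into two components, and any valid coupling must transport a net mass of $|p(A_e)-q(A_e)|$ across $e$, where $A_e$ denotes (the leaves inside) one of the two components. Summing over edges and observing this lower bound is tight gives
\[
\tran_{d_T}(p,q) = \sum_{e\in T} w_e \,|p(A_e)-q(A_e)|.
\]
I would justify this in one line: the lower bound follows from the fact that if we integrate the coupling $M$ against the indicator of $A_e$ on the first coordinate minus that on the second, we obtain $p(A_e)-q(A_e)$, while only pairs $(x,y)$ with $x,y$ on opposite sides of $e$ contribute, and each such pair pays at least $w_e$ for its mass; the matching upper bound is realized by the standard tree transport plan that cancels mass greedily between siblings level by level.

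With the tree formula in hand, the proof reduces to classifying the edges of $T$ and identifying the associated $A_e$. Our construction has exactly two types of edges. First, for each leaf $x\in X$, there is an edge of weight $2^l$ connecting $x$ to $\pi_l(x)\in N_l$, and the component containing $x$ is $\{x\}$ itself; this contributes $2^l\,|p(x)-q(x)|$. Summing over $x\in X$ gives $2^l\,\mathcal L_1(p,q)$. Second, for each $i$ with $l\le i\le r-1$ and each $z\in N_i$, there is an edge of weight $2^{i+1}$ connecting $z$ to $\pi_{i+1}(z)\in N_{i+1}$; the component hanging below this edge is precisely the subtree rooted at $z$, whose leaf mass is by definition $\tilde p_i(z)$ (resp.\ $\tilde q_i(z)$). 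Summing over $z\in N_i$ and then over $i$ contributes $\sum_{i=l}^{r-1} 2^{i+1}\,\mathcal L_1(\tilde p_i,\tilde q_i)$.

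Adding the two contributions yields exactly the right-hand side of \eqref{two}, completing the proof. The only step that is not a straight-forward bookkeeping is the tree-Wasserstein formula itself; the rest is purely combinatorial, since the levels $N_i$ partition the internal edges of $T$ into groups of constant weight $2^{i+1}$ whose associated cuts correspond bijectively to the points of $N_i$. I expect the main subtlety to be making sure that the subtree-mass convention aligns with the definition of $\tilde p_i,\tilde q_i$ (so that the edge cut $|p(A_e)-q(A_e)|$ equals $|\tilde p_i(z)-\tilde q_i(z)|$ rather than off by a level), which is immediate from how the projections $\pi_i$ and the tree were defined.
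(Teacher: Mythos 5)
Your proof is correct and follows essentially the same route as the paper's: both rest on the edge-cut formula $\tran_{d_T}(p,q)=\sum_e w_e\,|p(A_e)-q(A_e)|$ for tree metrics and then classify the edges of $T$ into leaf edges of weight $2^l$ and level-$i$ internal edges of weight $2^{i+1}$ whose cuts correspond to $\tilde p_i,\tilde q_i$. If anything, you are more explicit than the paper about why the cut quantity is both a lower bound (every coupling moves at least $|p(A_e)-q(A_e)|$ mass across $e$) and achievable (greedy cancellation along the tree), which the paper asserts in a single sentence.
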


 \begin{proof}[Proof of Lemma~\ref{lemdecom}]
Consider an edge $e$ which connects a node $x$ in the $i$th-level and its father, it has weight of $2^{i+1}$. Since the probability mass of $p$ and $q$ on the leaves inside the subtree rooted $x$ differ by $|\tilde p_{i}(x)-\tilde q_{i}(x)|$, hence there is exactly $|\tilde p_{i}(x)-\tilde q_{i}(x)|$ probability mass transported along $e$ which produces the cost $2^{i+1}|\tilde p_{i}(x)-\tilde q_{i}(x)|$ in Wasserstein distance. Summing over all edges, we have
  $$
 \tran_{d_T}(p,q)=2^l\mathcal{L}_1(p,q)+\sum_{i=l}^{r-1} 2^{i+1} \mathcal{L}_1(\tilde p_{i},\tilde q_{i}).
 $$
 where we note that every leaf $z\in X$ has an edge incident on it with weight $2^l$.
 \end{proof}

Now we can prove the correctness of Algorithm \ref{algone}.\\

\begin{proof}[Proof of Upper Bound in Theorem~\ref{gen}]
By Corollary~\ref{vv14coro} and the median trick, $O\left(\frac{|N_i|^{1/2}}{\varepsilon_i^2}\log\frac{1}{\delta}\right)$ many samples suffice to test $\tilde p_i=\tilde q_i$ versus $\mathcal{L}_1(\tilde p_{i},\tilde q_{i})\geq \Omega(\varepsilon_i)$ with probability at least $1-\delta$. Choose $\delta$ such that $(r-l+1)\delta\leq 1/3$, then by union bound, with probability at least $2/3$, all testers succeed, and when all sub-testers succeed, we are guaranteed to report a correct answer.

When $p=q$, we have $\tilde p_i(x)=\tilde q_i(x)$ for each $i$ and $x$, thus with probability at least $2/3$, every sub-tester accepts.

When $\tran_{d}(p,q)\geq \varepsilon$, by $(\ref{one})$ and $(\ref{two})$, one has
$$\sum_{i=l}^{r-1}2^{i+1}\mathcal{L}_1(\tilde p_i,\tilde q_i)\geq\tran_{d_T}(p,q)-2^l\mathcal{L}_1(p,q)\geq\veps-2\veps/8=\Omega(\veps)$$
therefore, there is some $i$ such that $\mathcal{L}_1(\tilde p_{i},\tilde q_{i})\geq \Omega(2^{-i}\varepsilon/(r-l))$, so the corresponding tester rejects, and the algorithm rejects, with probability at least $1-\delta\geq2/3$. To satisfy the sample complexity of all sub-testers, the overall upper bound is finally given by,
 $$O\left(\max\left\{(r-l)^3\frac{2^{2i}|N_i|^{1/2}}{\varepsilon^2}:l\leq i\leq r-1\right\}\right)=\tilde{O}\left(\max\left\{\frac{2^{2i}|N_i|^{1/2}}{\varepsilon^2}:\log \frac{\varepsilon}{8}\leq i\leq \log D\right\}\right). $$
\end{proof}

\subsection{Lower Bound over General Metric Space}

We prove the worst-case lower bound of sample complexity for the problem $\mathrm{WIT}(X,d,p,\veps)$, which completes the proof of Theorem~\ref{gen}.

\begin{proof}[Proof of Lower Bound in Theorem \ref{gen}] Let $i=\arg\max \{\frac{2^{2j}|N_j|^{1/2}}{\varepsilon^2}:\log \frac{\varepsilon}{8}\leq j\leq \log D\})$. Denote $n$ the number of points in $N_i$ and $N_i=\{x_1,x_2,...x_n\}\subset X$. We then show how to convert the identity testing problem on $[n]$ in $L_1$ distance to the Wasserstein identity testing problem on $(X,d)$.

On testing if an unknown distribution $u$ is identical to $v$ in $L_1$-distance where $u$ and $v$ are supported on $[n]$. We make the following transformation: let $v'$ be a distribution supported on $N_i\subset X$ such that $v'(x_j)=v(j),j\in [n]$, and construct a distribution $u'$ supported on $N_i$ by using $u$ such that a sample $j$ from $u$ is mapped into a sample $x_j$. Hence $u'(x_j)=u(j)$ by construction.

So if $u=v$ then $u'=v'$ while if $\mathcal{L}_1(u,v)\geq \varepsilon$ then $\tran_d (u', v')\geq 2^{i-1}\varepsilon$ (recall $N_i$ is an $2^i$-packing of $X$). So if we can test $u'=v'$ versus $\tran_d (u', v')\geq2^{i-1}\varepsilon$ by using $o(\frac{4^i\sqrt{n}}{(2^{i-1}\varepsilon)^2})=o(\frac{\sqrt{n}}{\veps^2})$ samples, we can distinguish $u=v$ from $\mathcal{L}_1(u,v)\geq \varepsilon$ by using $o(\frac{\sqrt{n}}{\varepsilon^2})$ samples, which contradicts the existing worst-case  $\Omega(\frac{\sqrt{n}}{\varepsilon^2})$ lower bound in Corollary~\ref{vv14coro}.

Hence an algorithm which solves the problem $\mathrm{WIT}(X,d,p,\varepsilon)$ for every distribution $p$ uses at least
$$
\Omega\bigg{(}\max\left\{\frac{2^{2i}|N_i|^{1/2}}{\varepsilon^2}:\log \frac{\varepsilon}{8}\leq i\leq \log D\right\}\bigg{)}
$$
many samples in the worst case (over the choices of distribution $p$).
\end{proof}

\subsection{Nearly Optimal Instance Sample Complexity Provided the "Doubling Condition"}
In this section, we characterize nearly optimal instance sample complexity of Problem \ref{pro}, by additionally assuming that $(X,d,p)$ satisfies the "Doubling Condition" (Definition \ref{double}). For convenience, we define some new notations.

 \begin{definition} \label{tech}
 Let $(X,d)$ be a metric space endowed with a distribution $p$. Assume $D$ is the diameter of $(X,d)$, and for evert $\log \frac{\varepsilon}{8}\leq i\leq \log D$, $N_i$ is a well separated $2^i$-net of $(X,d)$.

 For every $y\in X$, define $\pi_i(y)=\mathrm{argmin}_{z\in N_i} d(z,y)$. For every $x\in N_i$, define the clustering of $x$ to be $C_i(x)=\{y\in X: \pi_i(y)=x\}$. Let $p_i(x):=\sum_{y\in C_i(x)} p(y)$. Let $N_i=\{x_1,x_2,...,x_n\}$. we can regard $p_i(\cdot)$ as a discrete distribution on $[n]$. With a little abuse of notation, for every $j\in[n]$, let $p_i(j)=p_i(x_j)=\sum_{y\in C_i(x_j)} p(y)$.

 \end{definition}

 \begin{lemma} \label{techlemma}
 Assume $(X,d)$ is a metric space endowed with a probability distribution $p$, $N_i$ is a well separated $2^i$-net and $C_i,p_i$ are constructed as in Definition \ref{tech}. Assume $N_i=\{x_1,x_2,...,x_n\}$, then for every $j\in [n]$,
 \begin{eqnarray}
 p(B(x_j,2^{i-1}))\leq p_i(j)\leq p(B(x_j,2^i)). \label{ulb}
 \end{eqnarray}
 \end{lemma}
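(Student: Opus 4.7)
The plan is to observe that $p_i(j) = p(C_i(x_j))$ by definition, so the two inequalities amount to sandwiching the Voronoi-type cluster $C_i(x_j)$ between two metric balls:
\[
B(x_j, 2^{i-1}) \subseteq C_i(x_j) \subseteq B(x_j, 2^i).
\]
Both inclusions will use only one of the two defining properties of a well-separated $2^i$-net.

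For the upper bound, I would use the fact that $N_i$ is a $2^i$-\emph{net}: every $y \in X$ satisfies $d(y,\pi_i(y)) = \min_{z \in N_i} d(z,y) \leq 2^i$. Hence if $y \in C_i(x_j)$, i.e.\ $\pi_i(y) = x_j$, then $d(y, x_j) \leq 2^i$, so $y \in B(x_j, 2^i)$. Summing $p(y)$ over $y \in C_i(x_j) \subseteq B(x_j, 2^i)$ gives $p_i(j) \leq p(B(x_j, 2^i))$.

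For the lower bound, I would use the fact that $N_i$ is a $2^i$-\emph{packing}: distinct points of $N_i$ are at pairwise distance greater than $2^i$. Take any $y \in B(x_j, 2^{i-1})$, so $d(y, x_j) \leq 2^{i-1}$. For every other $x_k \in N_i$, the triangle inequality gives
\[
d(y, x_k) \;\geq\; d(x_j, x_k) - d(y, x_j) \;>\; 2^i - 2^{i-1} \;=\; 2^{i-1} \;\geq\; d(y, x_j),
\]
so $x_j$ is the strict nearest neighbor of $y$ in $N_i$, i.e.\ $\pi_i(y) = x_j$ and $y \in C_i(x_j)$. Thus $B(x_j, 2^{i-1}) \subseteq C_i(x_j)$, which yields $p(B(x_j,2^{i-1})) \leq p_i(j)$.

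There is no real obstacle here; the only subtlety is handling ties in the definition of $\pi_i$, but the packing bound gives \emph{strict} inequality $d(y,x_k) > d(y,x_j)$ for $k \neq j$, so $x_j$ is uniquely minimizing and the argmin is well defined on $B(x_j, 2^{i-1})$ regardless of the tie-breaking convention. Combining the two inclusions gives the claim \eqref{ulb}.
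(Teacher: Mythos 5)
Your proof is correct and takes essentially the same approach as the paper: both reduce the claim to the sandwich $B(x_j,2^{i-1}) \subseteq C_i(x_j) \subseteq B(x_j,2^i)$, deriving the inner inclusion from the packing property and the outer one from the net property. The only cosmetic difference is that the paper argues both inclusions by contradiction/contrapositive while you argue them directly, and you spell out the tie-breaking point explicitly.
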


 \begin{proof}[Proof of Lemma~\ref{techlemma}]
 We only need to prove that $B(x_j,2^{i-1})\subset C_i(x_j)\subset B(x_j,2^i)$. Recall that $N_i$ is a $2^i$-net as well as $2^i$-packing of $(X,d)$.

 For every $x\in B(x_j,2^{i-1})$, if $x$ is clustered to some $x_k,k\not=j$, then by definition $d(x,x_k)\leq d(x,x_j)\leq 2^{i-1}$. Hence we have $d(x_k,x_j)\leq d(x,x_k)+d(x,x_j)\leq 2^{i-1}+2^{i-1}=2^i$ which contradicts the fact that $N_i$ is a $2^i$-packing. So we have, $B(x_j,2^{i-1})\subset C_i(x_j)$.

 For every $y\not\in B(x_j,2^i)$, note that $N_i$ is a $2^i$-net, so there is some $k\in [n]$ such that $d(x_k,y)\leq 2^i<d(x_j,y)$ which means $y\not\in C_i(x_j)$. So we have $C_i(x_j)\subset B(x_j,2^i)$.
 \end{proof}

\begin{remark}
The reader may have a natural question, why do we use $p_i$ instead of $\tilde{p}_i$ defined in the proof of Theorem~\ref{gen}? From a technical perspective, our answer is that we cannot obtain upper and lower bound for $\tilde{p}_i$ as good as (\ref{ulb}), which will be essential in the proof of the instance lower bound.
\end{remark}

 \begin{theorem} \label{instance}
 Let $(X,d)$ be a metric space endowed with a distribution $p$ and $\varepsilon>0$ provided $(X,d,p)$ satisfies the "doubling condition" Definition \ref{double}. Let $\{N_i,\log \frac{\varepsilon}{8}\leq i\leq \log D\}$ be a sequence of well separated $2^i$-net of $(X,d)$. There is an algorithm, given sample access to an unknown distribution $q$ over $X$,
 \begin{itemize}
 \item accepts with probability at least $2/3$ if $p=q$;
 \item rejects with probability at least $2/3$ if $\tran_d(p,q)\geq \varepsilon$.
 \end{itemize} Let $p_i()$ be as defined in Definition \ref{tech} then the sample complexity of this algorithm is
 $$\tilde{O}\bigg{(}\max \bigg{\{}\max\bigg{\{}2^{2i}\varepsilon^{-2} |{p}_i(\cdot)|_{2/3},2^i\varepsilon^{-1}\bigg{\}}:\log \frac{\varepsilon}{8}\leq i\leq\log D\bigg{\}}\bigg{)}.
 $$

 Moreover, the following is a sample complexity lower bound for this task.
 $${\Omega}\bigg{(}\max \bigg{\{}\max\bigg{\{}2^{2i} \varepsilon^{-2}|{p}_i(\cdot)^{-max}_{-2^{-i}\varepsilon}|_{2/3},2^i\varepsilon^{-1}\bigg{\}}:\log \frac{\varepsilon}{8}\leq i\leq\log D\bigg{\}} \bigg{)}
 $$ many samples. Here ${p}_i(\cdot)^{-max}_{-2^{-i}\varepsilon}$ represents the probability vector obtained by removing element with the largest probability mass and keeping moving the element with the smallest probability mass until $2^{-i}\varepsilon$ mass is removed.
 \end{theorem}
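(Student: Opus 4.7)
The plan is to reuse the chaining framework of Theorem~\ref{gen} verbatim, but substitute the worst-case $L_1$ subtester (Corollary~\ref{vv14coro}) with the instance-optimal Valiant--Valiant tester (Theorem~\ref{vv14thm}) at every scale, and then lean on the doubling condition to relate the subtree distribution $\tilde p_i$ driving the algorithm to the Voronoi distribution $p_i$ appearing in the target bound (Lemma~\ref{techlemma} handles the latter in terms of ball masses).

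\emph{Upper bound.} I would run Algorithm~\ref{algone} but invoke Theorem~\ref{vv14thm} with proximity parameter $\veps_i=\Theta(2^{-i}\veps/\log(D/\veps))$ at each scale $i$; correctness still follows from Lemmas~\ref{lemd_t} and~\ref{lemdecom}. This produces a per-scale budget of $\tilde O(\max\{1/\veps_i,\,|\tilde p_i|_{2/3}/\veps_i^2\})$, which already exhibits the $2^i/\veps$ and $2^{2i}/\veps^2$ prefactors. To replace $|\tilde p_i|_{2/3}$ by $|p_i|_{2/3}$ I would argue pointwise: every leaf in the subtree rooted at $x_j\in N_i$ lies within $2^l+\cdots+2^i\le 2^{i+1}$ of $x_j$, so $\tilde p_i(x_j)\le p(B(x_j,2^{i+1}))$; together with $p_i(j)\ge p(B(x_j,2^{i-1}))$ from Lemma~\ref{techlemma}, a bounded number of applications of the doubling condition give $\tilde p_i(x_j)=O(p_i(j))$ coordinatewise, hence $|\tilde p_i|_{2/3}=O(|p_i|_{2/3})$.

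\emph{Lower bound.} Fix a scale $i$ and reduce from the $L_1$ identity-testing lower bound of Theorem~\ref{vv14thm} on $N_i$ at proximity $\veps_i=\Theta(2^{-i}\veps)$. Given an unknown $u$ on $N_i$ that is either $p_i$ or $\veps_i$-far from $p_i$ in $L_1$, define $q(y):=p(y)\,u(\pi_i(y))/p_i(\pi_i(y))$ on $X$. Since $p$ is known, one sample from $q$ can be simulated from one sample of $u$ by drawing $j\sim u$ and then $y$ from the conditional $p(\cdot\mid C_i(x_j))$; in the completeness case $u=p_i$ one has $q=p$ by construction. Hence any $T$-sample WIT algorithm yields a $T$-sample $L_1$ tester at scale $i$, and maximizing VV's lower bound over $i$ yields both the $|p_i^{-\max}_{-2^{-i}\veps}|_{2/3}\cdot 2^{2i}/\veps^2$ and $2^i/\veps$ terms.

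The main technical obstacle is the geometric inequality $\tran_d(p,q)\ge c_0\cdot 2^i\,\|p_i-u\|_1$ for a constant $c_0=c_0(C)>0$, which is what makes the reduction produce separation $\veps$ in $\tran_d$ from separation $\veps_i$ in $L_1$. I would prove it by Kantorovich--Rubinstein duality with test function $F(y):=\sum_j \mathrm{sgn}(p_i(j)-u(j))\cdot\max\{0,\,2^{i-2}-d(y,x_j)\}$. The strict $2^i$-separation of $N_i$ makes the bump supports $B(x_j,2^{i-2})$ pairwise disjoint, and a short case analysis (bumps of the same sign; opposite signs; bump versus zero region) verifies that $F$ is globally $1$-Lipschitz, with the radius $2^{i-2}$ calibrated precisely to this purpose. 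Each bump lies inside $C_i(x_j)$ by Lemma~\ref{techlemma}, so $\int F\,d(p-q)$ splits cluster-wise into $\sum_j A_j|p_i(j)-u(j)|/p_i(j)$ where $A_j\ge 2^{i-3}\,p(B(x_j,2^{i-3}))$; one further application of the doubling condition gives $p(B(x_j,2^{i-3}))\ge p_i(j)/C^{O(1)}$, completing the bound. The delicate point is verifying the global Lipschitz property cleanly and keeping the $C$-dependence hidden inside the $\tilde O$/$\Omega$ constants rather than corrupting the $-2^{-i}\veps$ trimming threshold.
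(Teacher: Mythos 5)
Your upper bound follows the paper's almost verbatim: run the chaining algorithm of Theorem~\ref{gen} with the instance-optimal subtester from Theorem~\ref{vv14thm}, then transfer $|\tilde p_i|_{2/3}$ to $|p_i|_{2/3}$ coordinate-wise via $\tilde p_i(x_j)\le p(B(x_j,2^{i+1}))\le C^2 p(B(x_j,2^{i-1}))\le C^2 p_i(j)$ using Lemma~\ref{techlemma} and two applications of the doubling condition --- exactly the paper's chain of inequalities.

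Your lower bound, however, is a genuinely different argument. The paper's reduction maps $j\sim u$ to $x_j$ with probability $p(B(x_j,2^{i-1}))/p_i(j)$ and to a fixed distant $y$ otherwise, producing a $q^*$ supported on $N_i\cup\{y\}$, and then lower-bounds $\tran_d(p,q^*)$ by a direct ball-by-ball transportation argument (Lemma~\ref{techlemma2}), exploiting that $q^*$ restricted to $B(x_j,2^{i-1})$ is a Dirac mass at $x_j$. You instead spread $u(j)$ over the Voronoi cell $C_i(x_j)$ proportionally to $p$, i.e. $q(y)=p(y)u(\pi_i(y))/p_i(\pi_i(y))$, and certify the separation by Kantorovich--Rubinstein duality with disjoint signed bump functions of radius $2^{i-2}$ at the $x_j$. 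Your version buys something concrete: under your map $u=p_i$ gives $q=p$ exactly, so the completeness side of the reduction is preserved; under the paper's map, $u=p_i$ gives $q^*(x_j)=p(B(x_j,2^{i-1}))$ concentrated on $N_i\cup\{y\}$, which is \emph{not} $p$ and is not even guaranteed to be within $\veps$ of $p$ in $\tran_d$, so the paper's reduction as written does not carry the yes-case over and needs an extra patch. The price of your route is the global $1$-Lipschitz check for $F$; the case analysis you sketch (same-sign, opposite-sign, bump-vs-zero) indeed closes with the $2^{i-2}$ radius and the strict $2^i$ separation of $N_i$, and the step $p(B(x_j,2^{i-3}))\ge C^{-3}p_i(j)$ parallels the paper's use of doubling in Lemma~\ref{techlemma2}. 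Both arguments incur a $C^{O(1)}$ loss in the proximity fed to Theorem~\ref{vv14thm}, so the trimming threshold is really $\Theta(C^{O(1)}2^{-i}\veps)$; the paper silently absorbs this into constants, and your explicit flag of it is appropriate.
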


 \begin{algorithm}[!h]
	\caption{Instance $\mathrm{WIT}$}
	\begin{algorithmic}[1] \label{algtwo}
		\REQUIRE Full description of metric space $(X,d)$, distribution $p$, proximity parameter $\varepsilon>0$ and sample access to $q$, well separated $\varepsilon$-net sequence $\{N_i:\log \frac{\varepsilon}{8}\leq i\leq \log D\}$.
		\ENSURE  Accept with probability at least $2/3$ if $p=q$; reject with probability at least $2/3$ if $\tran_{d}(p,q)\geq \varepsilon$.
		\STATE Set $m=\tilde{O}\bigg{(}\max \bigg{\{}\max\bigg{\{}2^{2i}\varepsilon^{-2} |{p}_i(\cdot)^{-max}_{-2^{-i-4}\varepsilon}|_{2/3},2^i\varepsilon^{-1}\bigg{\}}:\log \frac{\varepsilon}{8}\leq i\leq\log D\bigg{\}}\bigg{)}$.
        \STATE Take $m$ i.i.d samples from $q$.
		\STATE For each $i$, using these samples to test whether $p_i=q_i$ or $\|p_i-q_i\|_1\geq \Omega(2^{-i}\varepsilon/\log(D/\veps))$. (By the instance-optimal $L_1$ tester given in Theorem~\ref{vv14thm})
		\RETURN Accept if all sub-tester accept, otherwise reject.
	\end{algorithmic}
\end{algorithm}

\begin{proof}[Proof of Theorem~\ref{instance}]
 Firstly, we prove the upper bound, which is relatively simpler. We proceed as in the proof of Theorem \ref{gen} to construct distributions $\tilde{p}_i$ for every $\log \frac{\varepsilon}{8}\leq i\leq \log D$. Then we use an instance version of Algorithm \ref{algone} by using instance optimal version $L_1$ subtester from Theorem~\ref{vv14thm} instead of the worst case version. By Theorem~\ref{vv14thm} and the union bound, we know that $m$ samples can guarantee each subtester succeed with probability at least $2/3$, then Algorithm \ref{algtwo} works by the same reason.

  The only remaining work is to convert $\tilde{p}_i$ into $p_i(\cdot)$ in the sample complexity. Recall that for $x\in N_i$, $\tilde{p}_i(x)$ is the sum of probability mass on the leaves inside the subtree rooted at $x$. Now note that for any such leaf $y$, by construction, $d(x,y)\leq 2^{l}+...+2^i\leq 2^{i+1}$ which means every leaf inside the subtree root at $x$ is contained in the ball $B(x,2^{i+1})$. So by doubling condition and Lemma \ref{techlemma},
  \begin{eqnarray}
  \tilde{p}_i(x)\leq p(B(x,2^{i+1}))\leq Cp(B(x,2^i))\leq C^2p(B(x,2^{i-1}))\leq C^2 p_i(x).
  \end{eqnarray}
   Since $C$ is a universal constant,  we can bound $|\tilde{p}_i(\cdot)^{-max}_{-2^{-i-4}\varepsilon}|_{2/3}$ by $|p_i(\cdot)|_{2/3}$ in big-O notation.

 Now we turn to the proof of lower bound. By Theorem~\ref{vv14thm}, we know any algorithm which tests the identity to $p_{i}$ in $\mathcal{L}_1$ distance with proximity parameter $2^{-{i}}\varepsilon$ requires
 $$
 \Omega(\max\{2^{-i}\varepsilon^{-1},\varepsilon^{-2}2^{2i}|{p_i}^{-max}_{-\varepsilon}|_{2/3}:i\in [\log \frac{\varepsilon}{8},\log D]\})
 $$
 many samples.

 For an unknown discrete distribution $q$ on $\big{[}|N_i(x)|\big{]}$, we show how to convert $q$ to a distribution $q^*$ on $X$ so as we can reduce the problem of identity testing to $p_i$ in $L_1$ distance to the problem of identity testing to $p$ in Wasserstein distance. We assume $N_i=\{x_1,x_2,...,x_n\}$ in what follows.

 Being precise, fix a $y\in X\backslash\bigcup_{j\in [n]} B(x_j,2^{i-1})$, every time we need to take a sample from $q^*$, we do the following: first take a sample $j\in[n]$ from $q$. With probability $\frac{p(B(x_j,2^{i-1}))}{p_i(j)}$, $x_j$ is regarded as the sample of $q^*$. With probability
 $1-\frac{p(B(x_j,2^{i-2}))}{p_i(j)}$, $y$ is regarded as the sample of $q^*$.

Obviously we have that $q^*(B(x_j,2^{i-1}))=q^*(x_j)=\frac{p(B(x_j,2^{i-1}))}{p_i(j)} q(j)$.

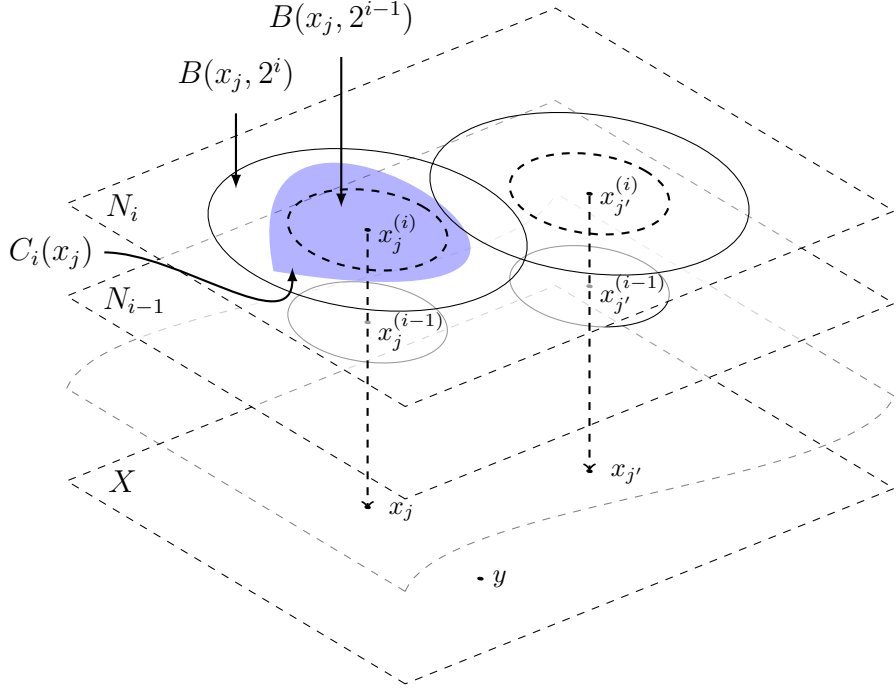
\begin{figure}[!h]
\centering
\begin{tikzpicture}[scale=.5,every node/.style={minimum size=1cm},on grid]
		
    %slanting: production of a set of n 'laminae' to be piled up. N=number of grids.
   \begin{scope}[
        yshift=-210,every node/.append style={
        yslant=0.4,xslant=-1},yslant=0.4,xslant=-1
                  ]
                          \draw[black,dashed] (-4,-2) rectangle (9,7);

	\draw[fill=black] (.1,3.1) circle (.05)
			(0,0) circle (.05)
			(4.6,1.7) circle (.05);
     \end{scope}
    
    \begin{scope}[
        yshift=-140,every node/.append style={
        yslant=0.4,xslant=-1},yslant=0.4,xslant=-1
                  ]
        \draw[black,dashed,opacity=.5](-4,-2) .. controls (-1,0) and (6,-4) .. (9,-2)
                  (-4,7) .. controls (-1,9) and (6,5) .. (9,7)
                  (-4,-2) -- (-4,7)
                  (9,-2) -- (9,7);
     \end{scope}

    \begin{scope}[
        yshift=-70,every node/.append style={
        yslant=0.4,xslant=-1},yslant=0.4,xslant=-1
                  ]
        %marking border
                \fill[white,fill opacity=.6] (-4,-2) rectangle (9,7);

        \draw[black,dashed] (-4,-2) rectangle (9,7);
        
        \draw [thin,black](.1,3.1) circle (1.5)
        				(4.6,1.7) circle (1.5);

	\draw[fill=black] (.1,3.1) circle (.05)
			(4.6,1.7) circle (.05);

    \end{scope} %end of drawing grids
	
	\begin{scope}[
            yshift=0,every node/.append style={
            yslant=0.4,xslant=-1},yslant=0.4,xslant=-1
            ]
        % opacity to prevent graphical interference
        \fill[white,fill opacity=.6] (-4,-2) rectangle (9,7);
        \draw[black,dashed] (-4,-2) rectangle (9,7);%marking borders
        
	\fill[fill=blue,fill opacity=.3] (-2.5,3) .. controls (4,11) and (3,-4) .. (-2.5,3);
        \draw [black,thin](.1,3.1) circle (3)
        				(4.6,1.7) circle (3);
	\draw [dashed,thick](.1,3.1) circle (1.5)
        				(4.6,1.7) circle (1.5);
        
	\draw[fill=black] (.1,3.1) circle (.05)
			(4.6,1.7) circle (.05);
        
    \end{scope}

%%%%%%%%%%%%%%%%%%%%%%%%%%%%%%

    %putting arrows and labels:

        \draw[-latex,thick,font=\large](-6.5,5)node[above]{$B(x_j,2^i)$} to[out=270,in=90] (-6.5,3);
        \draw[-latex,thick,font=\large](-3.7,6.5)node[above]{$B(x_j,2^{i-1})$} to[out=270,in=90] (-3.7,2.5);
        \draw[-latex,thick,font=\large](-10,1.3)node[left]{$C_i(x_j)$} to[out=0,in=270] (-5,.9);
%\draw[-latex,thick](-14,-.2)node[left]{$N_{l-1}=X$} to[out=0,in=90] (-9,-.2);	
    
    %drawing points on grid's conrners.
    
    	\fill[black,font=\large] (-8.5,2.5) node [left] {$N_i$}
					(-8.1,0) node [left] {$N_{i-1}$}
					(-8.5,-4.8) node [left] {$X$};
					
	\fill[black,font=\small] (-3.2,1.8) node[right] {$x_j^{(i)}$}
					(2.7,2.8) node[right] {$x_{j'}^{(i)}$}
					(-3,-0.8) node[right] {$x_j^{(i-1)}$}
					(2.9,0.2) node[right] {$x_{j'}^{(i-1)}$}
					(-3.1,-5.6) node[right] {$x_j$}
					(2.9,-4.6) node[right] {$x_{j'}$}
					(-0.5,-7.4)node[right] {$y$};
					
	\draw[->,dashed,thick](-3,1.8)node{}to(-3,-5.5);
	\draw[->,dashed,thick](2.9,2.8)node{}to(2.9,-4.5);
	
\end{tikzpicture}

\caption{\small An illustration of $B(x_j,2^i),B(x_j,2^{i-1})$ and $C_i(x_j)$. As shown in the proof, the balls $B(x_j,2^{i-1})$ for $x_j\in N_i$ are disjoint. In the construction of $q^*$, the probability mass is only placed upon $x_j$s in $N_i$, and some $y\in X$ which is not inside any $B(x_j,2^{i-1})$.}
\end{figure}
%%%%%%-------------------------------------------------fancy graph here

 \begin{lemma} \label{techlemma2}
 \begin{eqnarray}
 \tran_d (p,q^*) \geq \sum_{j\in [n]} 2^{i-1} |p(B(x_j,2^{i-1}))-q^*(B(x_j,2^{i-1}))|
 \end{eqnarray}
 \end{lemma}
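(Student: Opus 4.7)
My plan is to use the Kantorovich--Rubinstein dual formulation
\[
\tran_d(p, q^*) \;=\; \sup_{f \text{ is } 1\text{-Lipschitz}} \int f \, d(p - q^*),
\]
and exhibit, for each choice of signs $\epsilon = (\epsilon_j)_{j \in [n]} \in \{-1,+1\}^n$, a single 1-Lipschitz function whose dual value recovers $\sum_j 2^{i-1}|p(B_j)-q^*(B_j)|$, where I abbreviate $B_j := B(x_j, 2^{i-1})$.

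The concrete test function I would use is the signed sum of ``tent'' functions centered at the points of $N_i$,
\[
f_\epsilon(x) \;=\; \sum_{j=1}^{n} \epsilon_j \, \bigl(2^{i-1} - d(x, x_j)\bigr)_+ ,
\]
where each summand is a $1$-Lipschitz bump of height $2^{i-1}$, attained at $x_j$, and supported on $B_j$. The first technical step is to verify that $f_\epsilon$ is $1$-Lipschitz on all of $X$. Since $N_i$ is a $2^i$-packing, $d(x_j, x_k) > 2^i$ for $j \neq k$, which makes the supports $B_j$ pairwise disjoint. A short case analysis over pairs $(a,b)$---both inside the same $B_j$; one inside $B_j$ and one in $B_k$ with $k \neq j$; one inside some $B_j$ and one outside all balls---combined with the triangle inequality and the fact that $y \notin \bigcup_j B_j$, gives $|f_\epsilon(a)-f_\epsilon(b)| \leq d(a,b)$.

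The second step is to evaluate $\int f_\epsilon d(p - q^*)$. On the $q^*$ side, because $q^*$ is supported on $\{x_1,\ldots,x_n,y\}$, because the packing forces $x_k \notin B_j$ for $k \neq j$, and because $y \notin B_j$ by construction, only the ``diagonal'' terms survive:
\[
\int f_\epsilon \, dq^* \;=\; \sum_j \epsilon_j \cdot 2^{i-1} \cdot q^*(x_j) \;=\; 2^{i-1} \sum_j \epsilon_j \, q^*(B_j).
\]
On the $p$ side,
\[
\int f_\epsilon \, dp \;=\; \sum_j \epsilon_j \int_{B_j}\bigl(2^{i-1} - d(x,x_j)\bigr)\,dp(x) \;=\; 2^{i-1}\sum_j \epsilon_j p(B_j) \;-\; \sum_j \epsilon_j R_j,
\]
where $R_j := \int_{B_j} d(x,x_j)\,dp(x) \in [0, 2^{i-1} p(B_j)]$. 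Setting $\epsilon_j = \mathrm{sign}\bigl(p(B_j) - q^*(B_j)\bigr)$ makes the main term equal $2^{i-1}\sum_j |p(B_j) - q^*(B_j)|$, precisely the claimed lower bound.

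The main obstacle I anticipate is the correction term $\sum_j \epsilon_j R_j$, whose sign is not controlled. To remove it I would combine the dual witness above with a direct coupling argument: for any coupling $\pi$ of $p$ and $q^*$, each unit of mass whose second coordinate lands at some $x_j$ but whose first coordinate lies in $B_j^c$ incurs transport cost strictly greater than $2^{i-1}$ (because the only $q^*$-mass inside $B_j$ sits at $x_j$, and $d(x_j, \cdot) > 2^{i-1}$ on $B_j^c$). Summing the ``incoming flux'' $\pi(B_j^c \times \{x_j\})$ over $j$, which are disjoint regions in the product space, controls the contribution of those $j$ with $p(B_j) < q^*(B_j)$; the symmetric contribution from $p(B_j) > q^*(B_j)$ is recovered by the dual witness above (where the correction $R_j$ in fact has favorable sign), and the two pieces together yield the stated inequality.
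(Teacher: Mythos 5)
Your approach via Kantorovich--Rubinstein duality with the sum-of-tents test function is a genuinely different (and more formal) route than the paper's, which is a brief coupling sketch: the paper simply asserts that since $q^*$ is concentrated at the center $x_j$ inside each disjoint ball $B_j:=B(x_j,2^{i-1})$, the net flux $|p(B_j)-q^*(B_j)|$ across $\partial B_j$ must cost at least $2^{i-1}$ per unit. Your Lipschitz verification of $f_\epsilon$ is correct, and your computation of $\int f_\epsilon\, d(p-q^*)=2^{i-1}\sum_j\epsilon_j\bigl(p(B_j)-q^*(B_j)\bigr)-\sum_j\epsilon_j R_j$ is also correct. The problem is in how you propose to dispose of the correction term: with $\epsilon_j=\mathrm{sign}\bigl(p(B_j)-q^*(B_j)\bigr)$, the term $-\epsilon_j R_j$ is \emph{negative} (unfavorable) exactly when $p(B_j)>q^*(B_j)$, and nonnegative (favorable) when $p(B_j)<q^*(B_j)$. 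You have the signs reversed. This matters, because the coupling argument you append handles precisely the favorable case $p(B_j)<q^*(B_j)$ (mass entering $B_j$ must travel from $B_j^c$ to $x_j$, distance $>2^{i-1}$), which duality already covers; the problematic case $p(B_j)>q^*(B_j)$, where mass can leave $B_j$ from near its boundary toward a nearby $y$ at cost much less than $2^{i-1}$, is handled by neither piece of your argument.

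There is also a structural issue with ``combining'' the two pieces: $\tran_d(p,q^*)$ is a single scalar, so a coupling lower bound and a dual lower bound give only the \emph{maximum} of the two, not their sum. To make the two-piece strategy rigorous you would need a single witness --- one $1$-Lipschitz $f$, or one cost decomposition of an arbitrary coupling $\pi$ --- that simultaneously charges $2^{i-1}$ for every unit of net flux, both entering and leaving each $B_j$. For the leaving direction this requires more than the disjointness of the balls and $q^*(B_j)=q^*(x_j)$: mass leaving $B_j$ can flow to $y$, and $d(a,y)$ for $a$ near $\partial B_j$ is not bounded below by $2^{i-1}$. (For what it is worth, the paper's own one-line justification has the same soft spot --- its ``cost per unit $\geq 2^{i-1}$'' claim is only immediate for incoming flux --- and implicitly leans on the doubling condition, which keeps $p$ from piling up near $\partial B_j$; a complete argument should make that dependence explicit.) As written, your plan does not close the gap, and the sign error means it cannot be repaired by a local fix.
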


 \begin{proof}[Proof of Lemma~\ref{techlemma2}]
 The Wasserstein distance $\tran_d (p,q^*)$ is the cost of transporting probability mass from $p$ to $q^*$. Recall that $N_i=\{x_1,x_2,...,x_n\}$ is a $2^i$-packing, hence the ball $B(x_j,2^{i-1}),j\in [n]$ doesn't intersect with each other. That means for every $j\in [n]$, $|p(B(x_j,2^{i-1}))-q^*(B(x_j,2^{i-1}))|$ much probability mass needs transporting into or out of the ball $B(x_j,2^{i-1})$. Noting that by construction, $q^*(x_j)=q^*(B(x_j,2^{i-1}))$, the probability mass of $q^*(B(x_j,2^{i-1}))$ is concentrated on $x_j$, hence the cost of transporting per unit probability mass is at least $2^{i-1}$. Summing over $j\in [n]$, we have,
 \begin{eqnarray}
 \tran_d (p,q^*) \geq \sum_{j\in [n]} 2^{i-1} |p(B(x_j,2^{i-1}))-q^*(B(x_j,2^{i-1}))|.
 \end{eqnarray}
 \end{proof}

  So by Lemma \ref{techlemma2}, the construction, the doubling condition and Lemma \ref{techlemma} respectively,
 \begin{eqnarray*}
 \tran_d (p,q^*) &\geq &\sum_{j\in [n]} 2^{i-1} |p(B(x_j,2^{i-1}))-q^*(B(x_j,2^{i-1}))|\\
 &=&\sum_{j\in [n]} 2^{i-1} p(B(x_j,2^{i-1})))|1-q(j)/p_i(j)|\\
 &\geq&\sum_{j\in [n]} 2^{i-1} C^{-1}p(B(x,2^i))|1-q(j)/p_i(j)|\\
 &\geq&\sum_{j\in [n]} 2^{i-1} C^{-1} |p_i(j)-q(j)|\\
 &=&2^{i-1}C^{-1}\mathcal{L}_1(p_i,q)
 \end{eqnarray*}

  So the problem $\mathrm{WIT}(X,d,p,\varepsilon)$ is at least as hard as the identity testing to $p_i$ in ${L}_1$ distance with proximity parameter $\Omega(2^{-i}\varepsilon)$ since we can reduce the latter to the former. So we conclude that any algorithm for the former task takes at least
  $$
 \Omega(\max\{{2^{i}\varepsilon}^{-1},{\varepsilon^{-2}}{2^{2i}|{p}_{i}(\cdot)^{-max}_{-\varepsilon}|_{2/3}}:\log \frac{\varepsilon}{8}\leq i\leq\log D]\})
  $$
  many samples.
\end{proof}

\begin{remark}
As we have seen in the proof, technically, the "doubling condition" is essential in proving $\tran_d (p,q^*)\geq \Omega(2^{-i} \mathcal{L}_1(p_i,q))$.
\end{remark}

We immediately obtain the following result as a corollary.

\begin{corollary} [Uniformity Testing on Hypercube and unit ball]\label{corohyp}
Assume $d$ is a constant.
Let $q$ be an unknown distribution over the $d$-dimensional hypercube $[0,1]^d$. Then there is an algorithm, given $\tilde{O}({\varepsilon^{-\max\{\frac{d}{2} ,2\}}})$ samples from $q$,
 \begin{itemize}
 \item If $q$ is uniform on $[0,1]^d$ then accepts with probability $2/3$.
 \item If $q$ is $\varepsilon$-far from uniformity in Euclidean Wasserstein distance then rejects with probability $2/3$.
  \end{itemize}

  On the other hand, any algorithm for this task requires $\Omega\left(\varepsilon^{-\max\{2,\frac{d}{2}\}}\right)$ many samples. Moreover, replacing $[0,1]^d$ with the unit ball $B_d(0,1)$, the sample complexity is still $\tilde{O}({\varepsilon^{-\max\{\frac{d}{2} ,2\}}})$.
  \end{corollary}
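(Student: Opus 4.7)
The strategy is simply to instantiate Theorem~\ref{instance} for $p = \calU$ on $[0,1]^d$ (respectively on $B_d(0,1)$), estimate each quantity that appears in the bound, and take the maximum over scales. I first check that $([0,1]^d, L_2, \calU)$ (and $(B_d(0,1), L_2, \calU)$) satisfies the doubling condition. Because the hypercube and unit ball are Lipschitz domains and $d$ is a constant, for every $x$ in the domain and every $r>0$ we have $\mathrm{vol}(B(x,2r) \cap X) \le C_d\,\mathrm{vol}(B(x,r) \cap X)$ for a constant $C_d$ depending only on $d$; this is standard and comes from the fact that $B(x,r) \cap X$ always contains at least a fixed fraction of $B(x,r)$. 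So Theorem~\ref{instance} applies.

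Next I would build a grid-style well-separated $2^i$-net $N_i$ for $\log(\varepsilon/8) \le i \le \log D$ with $D = O(1)$. A standard covering/packing bound gives $|N_i| = \Theta(\max(1, 2^{-id}))$. For such a net, the cluster $C_i(x_j)$ attached to every $x_j \in N_i$ has uniform-measure $p_i(j) = \Theta(2^{id})$ by Lemma~\ref{techlemma} combined with the doubling condition, since $B(x_j, 2^{i-1}) \subset C_i(x_j) \subset B(x_j, 2^i)$ and both sandwich balls have volume $\Theta(2^{id})$. Therefore
\begin{equation*}
|p_i(\cdot)|_{2/3} = \Bigl(\sum_{j} p_i(j)^{2/3}\Bigr)^{3/2} = \Theta\bigl((2^{-id}\cdot 2^{2id/3})^{3/2}\bigr) = \Theta(2^{-id/2}).
\end{equation*}

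Plugging into Theorem~\ref{instance}, the dominant term is $2^{2i}\varepsilon^{-2}|p_i|_{2/3} = \Theta(2^{i(2-d/2)}/\varepsilon^2)$, while the additive $2^i/\varepsilon$ is bounded by $D/\varepsilon = O(\varepsilon^{-1})$ and is therefore absorbed. For $d \le 4$ the expression $2^{i(2-d/2)}$ is maximized at the largest scale $2^i = \Theta(D) = \Theta(1)$, yielding $\tilde O(\varepsilon^{-2})$; for $d \ge 4$ it is maximized at the smallest scale $2^i = \Theta(\varepsilon)$, yielding $\tilde O(\varepsilon^{2-d/2}\cdot \varepsilon^{-2}) = \tilde O(\varepsilon^{-d/2})$. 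In either case we get $\tilde O(\varepsilon^{-\max\{2, d/2\}})$, which is the claimed upper bound.

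The matching lower bound follows from the lower-bound half of Theorem~\ref{instance}: I must check that the truncated quasinorm $|p_i(\cdot)^{-\max}_{-2^{-i}\varepsilon}|_{2/3}$ is of the same order as $|p_i(\cdot)|_{2/3}$. Since every entry of $p_i$ is $\Theta(2^{id})$, removing the heaviest entry and an additional mass of $2^{-i}\varepsilon$ discards only $\Theta(2^{-i}\varepsilon \cdot 2^{-id} / 2^{id}) = \Theta(\varepsilon\cdot 2^{-i(d+1)})$ of the coordinates, which changes $\sum_j p_i(j)^{2/3}$ by a lower-order amount as long as $|N_i| \gg 1$; for scales where $|N_i| = O(1)$ the whole upper-bound contribution is already $O(\varepsilon^{-2})$ and is majorized by the trivial $\Omega(1/\varepsilon)$ term. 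The same estimate for the maximum over $i$ then gives $\Omega(\varepsilon^{-\max\{2, d/2\}})$, matching the upper bound up to polylogarithmic factors. The unit-ball case is identical since $B_d(0,1)$ also admits grid-type $2^i$-nets with $|N_i| = \Theta(\max(1, 2^{-id}))$ and the same doubling constant.

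The only mildly delicate step is the estimate $p_i(j) = \Theta(2^{id})$ for nets attached to boundary points: this is where the Lipschitz-domain property (equivalently, the doubling condition on $\calU$) is used to rule out clusters whose volume is anomalously small compared to $2^{id}$. Once this is in hand, all remaining estimates are routine geometric summations.
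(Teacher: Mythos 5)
Your proposal follows essentially the same route as the paper: instantiate Theorem~\ref{instance} with a grid-style well-separated $2^i$-net, compute $|p_i(\cdot)|_{2/3}=\Theta(2^{-id/2})$, and maximize $2^{2i}\varepsilon^{-2}|p_i|_{2/3}$ over the relevant scales. The paper in fact makes an additional simplification (since $d$ is constant, pass to $L_\infty$ first, so the grid net is exact), but this is cosmetic, and your extra verification that the uniform distribution on $[0,1]^d$ or $B_d(0,1)$ satisfies the doubling condition is a nice completeness step the paper omits.

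The one place you should tighten is the lower-bound check. You write that removing $2^{-i}\varepsilon$ mass changes $\sum_j p_i(j)^{2/3}$ by a lower-order amount ``as long as $|N_i|\gg 1$,'' but that is not the operative condition. Since all entries of $p_i$ have mass $\Theta(2^{id})$ and $|N_i|=\Theta(2^{-id})$, the \emph{fraction} of coordinates discarded is $\Theta(2^{-i}\varepsilon)$, which is controlled by $2^i\gtrsim\varepsilon$, not by the size of $N_i$. In particular at the extremal scale $2^i\approx\varepsilon/8$ (which is exactly the scale that dominates the upper bound when $d>4$) one has $2^{-i}\varepsilon\approx 8>1$, so the truncated vector $p_i(\cdot)^{-\max}_{-2^{-i}\varepsilon}$ is empty and its $2/3$-quasinorm is $0$. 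The fix is easy --- move up one or two levels to $2^i=\Theta(\varepsilon)$ with a larger constant, where the fraction removed is $<1/2$ and the quasinorm is still $\Theta(2^{-id/2})$, giving the same $\Omega(\varepsilon^{-d/2})$ --- but as written your criterion would not flag the issue. Relatedly, the clause ``the whole upper-bound contribution is already $O(\varepsilon^{-2})$ and is majorized by the trivial $\Omega(1/\varepsilon)$ term'' has the inequality going the wrong way; what you presumably mean is that at scales with $|N_i|=O(1)$ the $2/3$-quasinorm term vanishes from the lower bound and one is left with only $\Omega(2^i/\varepsilon)$, and the needed $\Omega(\varepsilon^{-2})$ comes from a neighboring scale where $|N_i|\geq 2$. (The paper's own ``by computation'' is equally terse on this point, so this is a shared imprecision rather than a divergence in method.) There is also a harmless algebra typo: $2^{-i}\varepsilon\cdot 2^{-id}/2^{id}$ is not $\varepsilon\cdot 2^{-i(d+1)}$; the correct count of removed coordinates is $2^{-i}\varepsilon/2^{id}=\varepsilon\cdot 2^{-i(d+1)}$, which is what you wanted and what you subsequently use.
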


  \begin{proof}[Proof of Corollary~\ref{corohyp}]
  Since $d$ is a constant, the Euclidean distance and the $L_{\infty}$ distance are equivalent. So we assume the metric is $L_{\infty}$ distance. Choosing the trivial well separated $2^i$-net for $\log \frac{\varepsilon}{8}\leq i\leq 0$,
  $$
  N_i=\{(k_12^i,k_22^i,...,k_d2^i):k_j=0,1,2,...,2^{-i},j\in [d]\}.
  $$
  Let $p$ be the uniform distribution on $[0,1]^d$ and $p_i$ be the distribution defined in Definition \ref{tech}. By computation, we have,
  $|p_i()|_{2/3}=\Theta(2^{-id/2})$ and ${p}_i(\cdot)^{-max}_{-2^{-i}\varepsilon}=\Theta({2^{-id/2}})$. So by Theorem \ref{instance} and simple computation, the sample complexity is
  \begin{eqnarray}
  \tilde{\Theta}(\varepsilon^{-\max\{2,d/2\}}). \label{ins}
  \end{eqnarray}

  Since $d$ is a constant, on the same way we can prove the sample complexity of Wasserstein identity testing in $B_d(0,1)$ is also given by (\ref{ins}).
  \end{proof}

\section{Future Work}
The most obvious direction of future work is to remove the "Doubling Condition" and
provide the instance optimal sample complexity bound for Wasserstein Identity Testing in arbitrary metric space.

A related problem is to consider another large family of distribution distance characterized by maximum mean discrepancy over functions in the unit ball of Reproducing Kernel Hilbert Space (RKHS) (see e.g., \cite{Gretton2012}).

\acks{The authors thank Dan Feldman and Jian Li for several insightful discussions.}

\bibliography{probdb}

\newpage

\appendix

\vskip 0.2in

\end{document}